\newif\ifisarxiv
\newcommand\hc{ \rowcolor{orange!40}}
\newif\ifdebug
\newcommand{\jianfei}[1]{\todo[fancyline,color=cyan!50]{\textbf{jianfei}: #1}\ignorespaces}
\newtheorem{theorem}{Theorem}
\newtheorem{proposition}{Proposition}
\newcommand{\ceil}[1]{\left\lceil #1 \right\rceil}
\newcommand{\floor}[1]{\left\lfloor #1 \right\rfloor}
\newcommand{\qa}[1]{Q_f\left(#1\right)}
\newcommand{\qt}[1]{Q_{\theta}\left(#1\right)}
\newcommand{\qb}[1]{Q_b(#1)}
\newcommand{\Hl}[1]{{\Hv^{(#1)}}}
\newcommand{\Dl}[1]{{D^{(#1)}}}
\newcommand{\tHl}[1]{{\tilde \Hv}^{(#1)}}
\newcommand{\hl}[1]{{\hv_i^{(#1)}}}
\newcommand{\Thetal}[1]{{\Thetav^{(#1)}}}
\newcommand{\tT}[1]{\mbox{${\tilde \Thetav}^{(#1)}$}}
\newcommand{\F}[2]{{\Fv^{(#1)}}\left(#2\right)}
\newcommand{\hn}{\hat\nabla}
\newcommand{\Jl}{\Jv^{(l)}}
\newcommand{\Kl}{\Kv^{(l)}}
\newcommand{\SR}{\mathtt{SR}}
\newcommand{\ssl}{S^{(l)}}
\newcommand{\zl}{Z^{(l)}}
\newcommand{\vect}[1]{\boldsymbol{\mathbf{#1}}}
\newcommand{\ve}[1]{\mathrm{vec}(#1)}
\newcommand{\E}[1]{\mathbb{E}\left[#1\right]}
\newcommand{\Econd}[2]{\mathbb{E}\left[#1\;\middle|\;#2\right]}
\newcommand{\Var}[1]{\mathrm{Var}\left[#1\right]}
\newcommand{\VarF}[1]{\mathrm{Var}\left[#1\right]}
\newcommand{\VarFcond}[2]{\mathrm{Var}\left[#1\;\middle|\;#2\right]}
\newcommand{\gammav}{\vect\gamma}
\newcommand{\epsilonv}{\vect\epsilon}
\newcommand{\muv}{\vect\mu}
\newcommand{\Thetav}{\vect\Theta}
\newcommand{\av}{\vect a}
\newcommand{\bv}{\vect b}
\newcommand{\ev}{\vect e}
\newcommand{\hv}{\vect h}
\newcommand{\nv}{\vect n}
\newcommand{\uv}{\vect u}
\newcommand{\yv}{\vect y}
\newcommand{\zv}{\vect z}
\newcommand{\Av}{\vect A}
\newcommand{\Bv}{\vect B}
\newcommand{\Fv}{\vect F}
\newcommand{\Hv}{\vect H}
\newcommand{\Iv}{\vect I}
\newcommand{\Jv}{\vect J}
\newcommand{\Kv}{\vect K}
\newcommand{\Qv}{\vect Q}
\newcommand{\Sv}{\vect S}
\newcommand{\Uv}{\vect U}
\newcommand{\Wv}{\vect W}
\newcommand{\Xv}{\vect X}
\newcommand{\Yv}{\vect Y}
\newcommand{\Bc}{\mathcal B}
\newcommand{\Lc}{\mathcal L}
\newcommand{\Yc}{\mathcal Y}
\newcommand{\Eb}{\mathbb E}
\newcommand{\Rb}{\mathbb R}
\newcommand{\norm}[1]{\left\lVert#1\right\rVert}
\newcommand{\diag}{\mbox{diag}}
\newtheorem*{rep@theorem}{\rep@title}
\newcommand{\newreptheorem}[2]{%
	\newenvironment{rep#1}[1]{%
		\def\rep@title{#2 \ref{##1}}%
		\begin{rep@theorem}}%
		{\end{rep@theorem}}}
\newcommand{\Zl}{\mathbf{1} \zv^{(l)}}
\newcommand{\Sl}{\Sv^{(l)}}
\newcommand{\Slinv}{(\Sv^{(l)})^{-1}}
\newcommand{\qat}{\textsc{QAT}\xspace}
\newcommand{\fqt}{\textsc{FQT}\xspace}
\title{A Statistical Framework for Low-bitwidth Training of \\Deep Neural Networks}
\author{%
  Jianfei Chen, Yu Gai, Zhewei Yao, Michael W. Mahoney, and Joseph E. Gonzalez \\
  University of California, Berkeley\\
  \texttt{\{jianfeic, yu\_gai, zheweiy, mahoneymw, jegonzal\}@berkeley.edu} \\
}
\begin{document}
\maketitle

\begin{abstract}
Fully quantized training (FQT), which uses low-bitwidth hardware by quantizing the activations, weights, and gradients of a neural network model, is a promising approach to accelerate the training of deep neural networks. 
One major challenge with FQT is the lack of theoretical understanding, in particular of how gradient quantization impacts convergence properties. 
In this paper, we address this problem by presenting a statistical framework for analyzing  FQT algorithms. 
We view the quantized gradient of FQT as a stochastic estimator of its full precision counterpart, a procedure known as quantization-aware training (QAT).
We show that the FQT gradient is an unbiased estimator of the QAT gradient, and we discuss the impact of gradient quantization on its variance. 
Inspired by these theoretical results, we develop two novel gradient quantizers, and we show that these have smaller variance than the existing per-tensor quantizer. 
For training ResNet-50 on ImageNet, our 5-bit block Householder quantizer achieves only 0.5\% validation accuracy loss relative to QAT, comparable to the existing INT8 baseline. Our code is publicly available at \url{https://github.com/cjf00000/StatQuant}.
\end{abstract}

\section{Introduction}



Deep neural networks (DNNs)  have a high computational cost and memory footprint that
slow down their training and inference.
By taking advantage of low-bitwidth computational units in hardware, neural network quantization methods provide promising approaches for reducing the cost of timing, memory, and energy consumption, for both training and inference. 

Notable quantization methods can be mainly categorized into two groups, inference quantization and training quantization.
In \emph{inference quantization}, the weights and the activations are quantized to speed up the inference phase. 
Among inference quantization approaches, \emph{post training quantization} usually does not require access to the partial/full training dataset, and it does not need to re-train/fine-tune the quantized model~\cite{Kravchik_2019_ICCV,nagel2019data,zhao2019improving,banner2018post,cai2020zeroq}. 
To reduce the performance gap between the quantized model and its full precision counterpart, \emph{quantization-aware training} (\qat) fine-tunes the quantized model on the training dataset~\cite{zhou2016dorefa,rastegari2016xnor,choi2018pact,Zhang_2018_ECCV,zhou2017incremental,jacob2018quantization,dong2019hawq,dong2019hawqv2,shen2019q}. 
However, \qat computes the gradients in full precision, so the training phase is not accelerated. 

\emph{Training quantization} methods, also known as \emph{fully quantized training} (\fqt), further quantize the gradients, compared with \qat. 
In \fqt, all the activations, weights, and gradients are quantized in both the forward and backward propagation.
Hence, training can be implemented efficiently on low-bitwidth computational units, such as tensor cores~\cite{tensorcore}. 
Low-bitwidth hardware is faster and more power-efficient, as compared to FP32 counterparts. 
As the need for training huge models continues to grow~\cite{tan2019efficientnet,radford2019language,devlin2018bert}, there has been increasing attention on \fqt. 
Earlier work on \fqt includes mixed-precision FP16/FP32 training~\cite{gupta2015deep} and lossy 2-bit training~\cite{zhou2016dorefa}. 
Recently, 8-bit \fqt has emerged as a sweet spot on the accuracy versus efficiency tradeoff. 
Various 8-bit numerical formats have been proposed, including INT8~\cite{banner2018scalable,wu2018training,zhu2019towards,yang2020training}, FP8~\cite{wang2018training,sun2019hybrid}, block floating point~\cite{drumond2018training}, FP8 with learnable parameters~\cite{cambier2020shifted}, and adaptive precision~\cite{sakr2019per}. 
Several of them achieved near-lossless ($\le$0.4\%) validation accuracy for training ResNet-50~\cite{he2016deep} on ImageNet~\cite{drumond2018training,zhu2019towards}. 

Despite abundant empirical results on \fqt, the theoretical understanding   is still lacking. 
Studying the effect of gradient quantization is challenging, due to the error accumulation caused by recursively quantizing the gradient at each layer. 
Existing theoretical results are based on very strong assumptions, such as untrained single layer networks~\cite{banner2018scalable} or convex objective functions~\cite{zhu2019towards}. 
To the best of our knowledge, there is not yet a bound on how the quantization scheme (bitwidth, type of quantizer) affects the quality of the quantized gradient. 

In this paper, we present a general framework for \fqt algorithms with theoretical guarantees. 
Unlike existing work~\cite{banner2018scalable,zhu2019towards}, which studies the \emph{worst case} behavior of the gradient, we adopt a statistical approach. 
The \fqt gradient can be viewed as a \emph{stochastic estimator} of the \qat gradient, and we analyze the quantized gradient through its bias and variance. 
We provide theoretical bounds to guide practice, and we show how to use these theoretical results to lead to improved performance in practice.
Our framework  makes minimal assumption: deterministic forward propagation and unbiased stochastic gradient quantizer. 
Our main contributions include the following.
\begin{enumerate}[noitemsep, nolistsep, labelindent=0pt, leftmargin=*]
\item 
We present a framework for \fqt and use the framework to show that 
the \fqt gradient is an \emph{unbiased} estimator of the \qat gradient.
This implies that \fqt and \qat algorithms eventually have the same convergence behavior, when the learning rate goes to zero.
\item 
We provide a general formula for the variance of the \fqt gradient, and  discuss the impact of bitwidth on gradient variance for the per-tensor gradient quantizer in existing \fqt algorithms.
\item 
We propose two novel gradient quantizers for \fqt, which significantly reduce variance.
Our quantizers address the large dynamic range variation across gradient samples and spread the signal across the gradient dimensions.
\item We evaluate our quantizers on ImageNet using ResNet50 and reduce the gradient encoding from 8-bits to 5-bits without loss in validation accuracy.
\end{enumerate}

\section{Framework for Fully Quantized Training}

\begin{figure}
    \centering
\includegraphics[width=\linewidth]{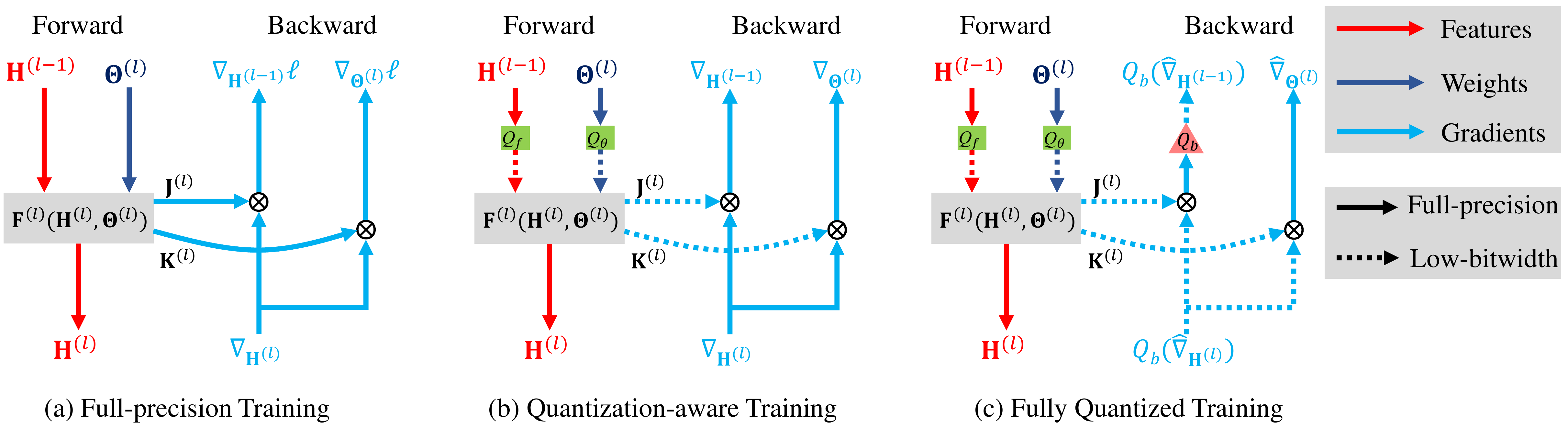}
\caption{Computational graphs for full-precision and quantized training settings. 
    \label{fig:architecture}}
\end{figure}

In this section, we describe the mathematical formulation and assumptions of our framework. 
Throughout this paper, we use uppercase and lowercase letters ($\Av$/$\bv$) to denote matrices and row vectors, respectively. 
The $i$-th row and the $j$-th column of matrix $\Av$ are denoted as $\av_i$ and $\Av_{:,j}$, respectively. 
The operator $\ve{\Av}$ stands for reshaping $\Av$ into a row vector. For a matrix $\Av$, $\norm{\Av}_F^2$ is the Frobenius norm and $\norm{\Av}_2^2$ is the $L_2$ operator norm. 
Furthermore, $\ev_i$ is the $i$-th indicator vector; $\mathbf{1}$ is the all-one vector; 
and $[N]=\{0, 1, \dots, N\}$, $[N]_+=\{1, 2, \dots, N\}$ are sets of integers. A table of notations can be found in Appendix~\ref{sec:notation}.

We assume that the DNN model $\Fv(\cdot; \Thetav)$ is composed of $L$ layers with the learnable parameter $\Thetav$. The forward propagation is 
\begin{align}\label{eqn:exact-forward-propagation}
\small
\Hl{0}=\Xv, \quad \Hl{l}=\F{l}{\Hl{l-1}; \Thetal{l}}, \quad \Fv(\Xv; \Thetav)=\Hl{L}, \quad \forall l\in [L]_+,
\end{align}
where $\small\Xv\in\Rb^{N\times D}$ is a batch of data ($N$ is the batch size, and $D$ is the feature dimensionality), and $\small\Fv(\Xv; \Thetav)\in \Rb^{N\times C}$ is the prediction ($C$ is the number of labels). 
Here, $\small\Fv^{(l)}$ is the $l$-th layer of the model with parameter $\small\Thetav^{(l)}$, and $\small\Hl{l}$ is an $N\times \Dl{l}$-dimensional feature map (a.k.a. activations) after the $l$-th layer. 
To optimize the parameter $\small\Thetav$, the following empirical risk is minimized,
\begin{align} \label{eqn:learning-obj}
\small
\min_{\Thetav} \Lc(\Thetav) := \mathbb E\left[\ell \left(\Fv(\Xv; \Thetav), \Yv\right)]\right],
\end{align}
where $\small\Yv\in\Rb^{N\times C}$ is the corresponding label of $\small\Xv$, $\ell$ is the loss function on a batch of labels, and the expectation is taken over all possible batches from a training dataset. Stochastic gradient descent (SGD)~\cite{robbins1951stochastic} is oftentimes used to solve the above problem, which takes the update 
$
\small
    \Thetav_{t+1} = \Thetav_{t} - \eta_t\nabla_{\Thetav_t} \ell \left(\Fv(\Xv; \Thetav_t), \Yv\right),
$
where $\eta_t$ is the $t$-th step learning rate. 

\subsection{Quantization Aware Training}
To accelerate inference, the forward propagation Eq.~(\ref{eqn:exact-forward-propagation}) is quantized as follows,\jianfei{notation table in appendix}
\begin{align} \label{eqn:quantized-forward-propagation}
\small
\forall l\in [L]_+,  \quad
\tHl{l-1}=\qa{\Hl{l-1}},~~\tT{l}=\qt{\Thetal{l}},~~\Hl{l} =\F{l}{\tHl{l-1}; \tT{l}}, 
\end{align}
where $\qa{\cdot}$ and $\qt{\cdot}$ are quantizers for features and weights, and $\small\tHl{l-1}$ and $\small\tT{l}$ are the quantized versions of $\small\Hl{l-1}$ and $\small\Thetal{l}$. 
For the particular case of linear layers, e.g., fully-connected and convolutional layers, forward propagation can be written as
$
\footnotesize{\F{l}{\tHl{l-1}; \tT{l}}=\tHl{l-1}\tT{l}}
$, and can be implemented efficiently with low-bitwidth computing kernels~\cite{gemmlowp,fbgemm}. 
Our framework assumes that the entire forward propagation is \emph{deterministic}.
That is to say, the quantizers $\qa{\cdot}$ and $\qt{\cdot}$ must be deterministic, and stochastic layers such as dropout are not allowed. 
This assumption aligns with current state-of-the-art inference quantization approaches~\cite{zhou2016dorefa,rastegari2016xnor,choi2018pact,nagel2019data,zhao2019improving,banner2018post,cai2020zeroq}. 

\qat trains the quantized model (Eq.~\ref{eqn:quantized-forward-propagation}) on a training dataset.  
Incorporating the chain rule and using the straight-through estimator (STE)~\cite{bengio2013estimating} 
of quantizers, which assumes that the gradient directly flow though the non-differentiable quantizer, QAT defines the gradient with back-propagation as:
\begin{align}\label{eqn:back-prop}
\small
\forall l\in [L]_+,\quad
\ve{\nabla_{\Thetal{l}}} := \ve{\nabla_{\Hl{l}}}  \Kl,~~
\ve{\nabla_{\Hl{l-1}}} :=  \ve{\nabla_{\Hl{l}}} \Jl,
\end{align}
where $\small\Jl:=\footnotesize{\frac{\partial \ve{\Hl{l}} }{\partial \ve{\tHl{l-1}} }}$, $\small\Kl:=\footnotesize\frac{\partial \ve{\Hl{l}}}{\partial\ve{{\tilde\Thetav}^{(l)}}}$ are two Jacobian matrices. We refer to $\small\nabla_{\Thetal{l}}$, $\small\nabla_{\Hl{l-1}}$ as the QAT gradient
, which provides approximate descent directions of the discrete learning problem~(\ref{eqn:learning-obj}), and we denote $\small\nabla_{\Thetav}=\{\nabla_{\Thetal{l}}\}$. 
The shape of $\small\nabla_{\Thetal{l}}$, $\small\nabla_{\Hl{l-1}}$ is the same with corresponding parameter $\small\Thetal{l}$ and feature $\small\Hl{l-1}$. 
For linear layers,  backpropagation can be written as
$
\small\footnotesize{\nabla_{\Thetal{l}} = \mbox{$\tHl{l-1}$}^\top \nabla_{\Hl{l}}}
$
 and 
$
\small\footnotesize{\nabla_{\Hl{l-1}} = \nabla_{\Hl{l}}\tT{l}^\top}.
$
Since $\nabla_{\Hl{l}}$ is not quantized,  the back propagation in \qat cannot be  implemented with low-bitwidth kernels. 

\subsection{Fully Quantized Training} \label{sec:fqt}
To make back propagation more efficient, FQT further quantizes the gradients at each layer as:
\begin{align}\label{eqn:quantized-back-prop}
\small
\forall l\in [L]_+,\quad \ve{\hn_{\Thetal{l}}} :=  \ve{\qb{\hn_{\Hl{l}}}}  \Kl,~~ \ve{\hn_{\Hl{l-1}}} :=  \ve{\qb{\hn_{\Hl{l}}}} \Jl,
\end{align}
where $\hn_{\Hl{L}}:=\nabla_{\Hl{L}}$, and $\qb{\cdot}$ is an \emph{unbiased stochastic} quantizer, i.e., $\E{\qb{\Xv}}=\Xv$, for any $\Xv$. Such stochastic quantizers are typically implemented with stochastic rounding~\cite{courbariaux2015binaryconnect}, and they are already widely adopted in existing FQT approaches~\cite{banner2018scalable,zhu2019towards,drumond2018training}. 
We refer to $\hn_{\Thetal{l}},\hn_{\Hl{l-1}}$ as the FQT gradient, and we denote $\hn_{\Thetav}=\{\hn_{\Thetal{l}}\}$. 
For linear layers, back propagation reduces to 
\begin{align} \label{eqn:linear-sq-backprop}
\small
\hn_{\Thetal{l}} = \mbox{$\tHl{l-1}$}^\top \qb{\hn_{\Hl{l}}}, \quad 
\hn_{\Hl{l-1}} = \qb{\hn_{\Hl{l}}}\tT{l}^\top,
\end{align}
which can be implemented  with low-bitwidth kernels since both operands are now quantized.

The relationship between full-precision training, QAT, and FQT is illustrated in Fig.~\ref{fig:architecture}. Full-precision training and QAT solve different learning problems, since full-precision training optimizes the exact model (Eq.~\ref{eqn:exact-forward-propagation}), while QAT approximately optimizes the quantized model (Eq.~\ref{eqn:quantized-forward-propagation}). 
In contrast, QAT and FQT aim to optimize the same model, but with different gradient estimators: QAT uses $\nabla_{\Thetav}$ and FQT uses $\hn_{\Thetav}$. 
In this paper, we study the difference between FQT and QAT by comparing these gradients. 
On the other hand, improving QAT towards full-precision training, which typically involves designing a better network $\Fv(\cdot; \Thetav)$ and learnable quantizers, is a different problem outside the scope of this paper. 
We refer readers to~\cite{choi2018pact,Zhang_2018_ECCV,dong2019hawq} for state-of-the-art approaches for QAT, which can be potentially combined with this paper to reduce the bitwidth of the forward propagation.

\section{Theoretical Results}

\begin{figure}
    \centering
\includegraphics[width=.8\linewidth]{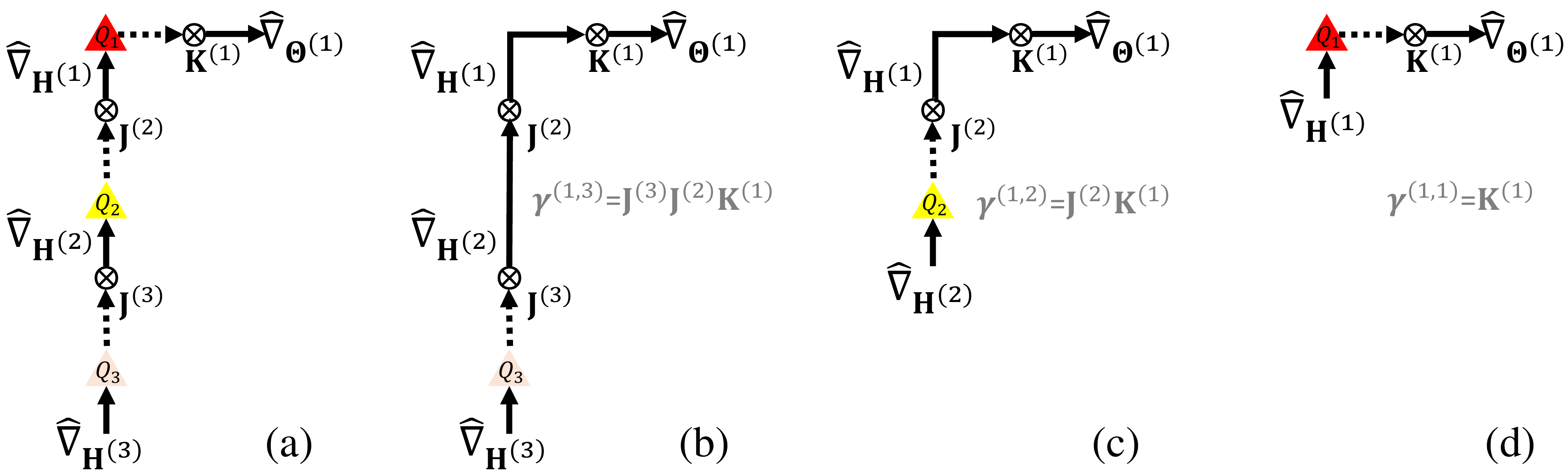}
\caption{Decomposition of the variance. 
(a) Computational subgraph for the first layer parameter gradient $\hn_{\Theta^{(1)}}$;
(b) $\small\ve{\qb{\hn_{\Hl{3}}}} \gammav^{(1,3)}$;
(c) $\small\ve{\qb{\hn_{\Hl{2}}}} \gammav^{(1,2)}$;
(d) $\small\ve{\qb{\hn_{\Hl{1}}}} \gammav^{(1,1)}$.
\label{fig:decomposition}}
\end{figure}

We view the FQT gradient $\hn_{\Thetav}$ as a stochastic estimator of the QAT gradient $\nabla_{\Thetav}$. The FQT gradient $\hn_{\Thetav}$ has $L+1$ sources of randomness. 
The first one is brought by randomly subsampling the batch $\Bc=(\Xv, \Yv)$, and it is shared with the QAT gradient. The other $L$ sources of randomness are due to the stochastic quantizers $\qb{\cdot}$ per each layer, as illustrated  in Fig.~\ref{fig:decomposition}(a). 

Both QAT and FQT can be viewed as stochastic optimization algorithms to solve the learning problem~(\ref{eqn:learning-obj}) approximately. We can analyze the behavior of these algorithms through the bias and variance of the gradient. All the proofs in this section can be found in Appendix~\ref{sec:proofs}.

\subsection{Bias}

The following theorem states that the FQT gradient is an unbiased estimator of the QAT gradient.
\begin{theorem}\label{thm:bias}
(Unbiased gradient) The FQT gradient $\hn_{\Thetav}$ defined as Eq.~(\ref{eqn:quantized-back-prop}) is an unbiased estimator of the QAT gradient defined as  Eq.~(\ref{eqn:back-prop}), i.e.,$\Econd{\hn_{\Thetav}}{\Bc} = \nabla_{\Thetav}$. 
\end{theorem}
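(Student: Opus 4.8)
The plan is to prove a slightly stronger intermediate claim, namely that each backward \emph{activation} gradient is conditionally unbiased, $\Econd{\hn_{\Hl{l}}}{\Bc} = \nabla_{\Hl{l}}$ for every $l \in [L]$, and then read off the parameter statement from it. I would argue by \emph{backward} induction on the layer index $l$, from $l = L$ down to $l = 0$, exploiting two structural facts. First, because the forward propagation (Eq.~\ref{eqn:quantized-forward-propagation}) is assumed deterministic, the Jacobians $\Jl$ and $\Kl$ are deterministic functions of the batch $\Bc$; hence, once we condition on $\Bc$, they may be pulled out of any expectation as constants. Second, the backward quantizer at layer $l$ injects fresh randomness that is independent of everything computed at the upstream layers $L, \dots, l+1$, so its unbiasedness $\E{\qb{\Xv}} = \Xv$ can be applied conditionally on its input $\hn_{\Hl{l}}$.

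For the base case, $\hn_{\Hl{L}} := \nabla_{\Hl{L}}$ holds by definition and is deterministic given $\Bc$, so $\Econd{\hn_{\Hl{L}}}{\Bc} = \nabla_{\Hl{L}}$ trivially. For the inductive step, assume $\Econd{\hn_{\Hl{l}}}{\Bc} = \nabla_{\Hl{l}}$. Starting from the definition $\ve{\hn_{\Hl{l-1}}} = \ve{\qb{\hn_{\Hl{l}}}}\Jl$ in Eq.~(\ref{eqn:quantized-back-prop}), I would apply the tower property by first conditioning on $\hn_{\Hl{l}}$ together with $\Bc$: unbiasedness of $\qb{\cdot}$ gives $\Econd{\ve{\qb{\hn_{\Hl{l}}}}}{\hn_{\Hl{l}}, \Bc} = \ve{\hn_{\Hl{l}}}$, and then taking the outer expectation over the remaining randomness and invoking the inductive hypothesis yields $\Econd{\ve{\qb{\hn_{\Hl{l}}}}}{\Bc} = \ve{\nabla_{\Hl{l}}}$. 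Pulling the deterministic $\Jl$ out of the expectation then gives $\Econd{\ve{\hn_{\Hl{l-1}}}}{\Bc} = \ve{\nabla_{\Hl{l}}}\Jl = \ve{\nabla_{\Hl{l-1}}}$, which closes the induction.

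With the activation-gradient claim in hand, the parameter statement follows by the identical one-step argument applied to $\ve{\hn_{\Thetal{l}}} = \ve{\qb{\hn_{\Hl{l}}}}\Kl$, giving $\Econd{\ve{\hn_{\Thetal{l}}}}{\Bc} = \ve{\nabla_{\Hl{l}}}\Kl = \ve{\nabla_{\Thetal{l}}}$ for every $l \in [L]_+$; stacking over all layers yields $\Econd{\hn_{\Thetav}}{\Bc} = \nabla_{\Thetav}$. I expect the main obstacle to be the careful bookkeeping of the nested randomness, i.e., making precise the filtration in which the layer-$l$ quantization noise is independent of the upstream gradient $\hn_{\Hl{l}}$ so that conditional unbiasedness may legitimately be applied, and simultaneously justifying that conditioning on $\Bc$ freezes every Jacobian. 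Once the correct conditioning order is fixed, each individual step reduces to a routine application of the tower property and linearity of expectation.
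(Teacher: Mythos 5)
Your proposal is correct and takes essentially the same route as the paper's proof: a backward induction on the layer index establishing $\Econd{\hn_{\Hl{l}}}{\Bc} = \nabla_{\Hl{l}}$, using the tower property together with the conditional unbiasedness of $\qb{\cdot}$ and the fact that $\Jl$ and $\Kl$ are deterministic given $\Bc$, then transferring to parameters via the one-step identity $\ve{\hn_{\Thetal{l}}} = \ve{\qb{\hn_{\Hl{l}}}}\Kl$ and stacking over layers. Your added care about the filtration in which the layer-$l$ quantization noise is applied is a sensible refinement of bookkeeping the paper handles implicitly, but it does not change the argument.
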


In standard SGD theory~\cite{bottou2010large}, an unbiased gradient implies convergence to a stationary point. QAT and FQT can be viewed as SGD algorithms that approximately solve the learning problem (\ref{eqn:learning-obj}).
More rigorously, we can view QAT and FQT as stochastic approximation~\cite{robbins1951stochastic} algorithms for finding the root $\E{\nabla_{\Thetav}}=0$, where QAT updates as $\Thetav_{t+1}=\Thetav_{t} - \eta_t \nabla_{\Thetav_t}$, and FQT has a more noisy update $\Thetav_{t+1}=\Thetav_{t} - \eta_t \hn_{\Thetav_t}$. Intuitively, when the step size $\eta_t\rightarrow 0$,  both algorithms simulate the ordinary differential equation $\frac{\mathrm{d}\Theta}{\mathrm{d}t}=-\E{\nabla_{\Thetav}}$ (Theorem 2.1 in~\cite{kushner2003stochastic}). Therefore, QAT and FQT are equivalent at the continuous limit, regardless of the  choice of the specific gradient quantizer $\qb{\cdot}$. 

\subsection{Variance}

We define the variance of a random matrix $\Xv$ as the summation of the variance for each entry, i.e., 
$\VarF{\Xv}:=\sum_i \VarF{\ve{\Xv}_i}= \mathbb E \norm{\ve{\Xv} - \E{\ve{\Xv}}}_2^2=\mathbb E\norm{\Xv - \E{\Xv}}_F^2.$

The convergence rate of stochastic optimization algorithms depends on the \emph{variance}. 
For example, for SGD with nonconvex and smooth objective functions, the convergence rate of the gradient norm $\Eb\norm{\nabla_{\Thetav_t}}^2$ is $O(\sigma/\sqrt{T})$ w.r.t. the numbers of total iterations $T$ (Corollary 2.2 in~\cite{ghadimi2013stochastic}), if the gradient variance is bounded by $\sigma^2$. Therefore, larger variance leads to more iterations to converge. The variance of the FQT gradient is given by the following theorem.
\begin{theorem}
\label{thm:variance}
(Gradient Variance) For all $k\le l$, let $\gammav^{(k,l)} = \left(\prod_{i=l}^{k+1} \Jv^{(i)} \right)\Kv^{(k)}$, then
\begin{align}\label{eqn:variance-general}
\footnotesize
\VarF{\hn_{\Thetav}}&\footnotesize
=\VarF{\nabla_{\Thetav}}+\sum_{l=1}^{L} \E{\sum_{k=1}^l \VarFcond{\ve{\qb{\hn_{\Hl{l}}}} \gammav^{(k,l)} }{\hn_{\Hl{l}}} }
\\
&\footnotesize\le \VarF{\nabla_{\Thetav}}+\sum_{l=1}^{L} 
\E{\VarFcond{\qb{\hn_{\Hl{l}}}  }{\hn_{\Hl{l}}}
\sum_{k=1}^l\norm{\gammav^{(k,l)}}_2^2}  .\label{eqn:variance-upper-bound}
\end{align}
\end{theorem}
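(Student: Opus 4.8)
The plan is to reduce everything to the quantization noise $\El{l} := \qb{\hn_{\Hl{l}}} - \hn_{\Hl{l}}$ and exploit its martingale-difference structure. Since the forward pass is deterministic, the Jacobians $\Jl,\Kl$ (hence every $\gammav^{(k,l)}$) are functions of the batch $\Bc$ alone and carry no backward randomness. Ordering the noises from the top layer downward, let $\Fc_{l}$ denote the information generated by $\Bc$ together with $\El{L},\dots,\El{l}$; because $\qb{\cdot}$ is unbiased and its output law depends only on its input $\hn_{\Hl{l}}$ (which is $\Fc_{l+1}$-measurable), we have $\Econd{\El{l}}{\Fc_{l+1}}=0$, i.e.\ the $\El{l}$ form a sequence of martingale differences with respect to $\{\Fc_l\}$.

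First I would unroll the backward recursion~(\ref{eqn:quantized-back-prop}). Writing $\qb{\hn_{\Hl{l}}}=\hn_{\Hl{l}}+\El{l}$ and subtracting the QAT recursion~(\ref{eqn:back-prop}) from the FQT recursion, the error $\ve{\hn_{\Hl{l-1}}}-\ve{\nabla_{\Hl{l-1}}}$ satisfies a linear recurrence driven by $\El{l}$, initialized at zero since $\hn_{\Hl{L}}=\nabla_{\Hl{L}}$. Solving it and contracting with $\Kv^{(k)}$ yields the telescoped identity $\ve{\hn_{\Thetal{k}}}-\ve{\nabla_{\Thetal{k}}}=\sum_{l=k}^{L}\ve{\El{l}}\gammav^{(k,l)}$, where $\gammav^{(k,k)}=\Kv^{(k)}$ comes from the empty product. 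This is the algebraic heart of the argument and I would verify it by induction on $L-k$.

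Next I would apply the law of total variance to the entrywise-summed variance, conditioning on $\Bc$: since $\Econd{\hn_{\Thetav}}{\Bc}=\nabla_{\Thetav}$ by Theorem~\ref{thm:bias} and $\nabla_{\Thetav}$ is $\Bc$-measurable, this splits $\VarF{\hn_{\Thetav}}$ into $\VarF{\nabla_{\Thetav}}+\E{\VarFcond{\hn_{\Thetav}}{\Bc}}$. Using the telescoped identity, $\VarFcond{\hn_{\Thetav}}{\Bc}=\sum_{k=1}^{L}\Econd{\norm{\sum_{l=k}^{L}\ve{\El{l}}\gammav^{(k,l)}}_2^2}{\Bc}$. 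Expanding the square produces diagonal terms and cross terms $\ve{\El{l}}\gammav^{(k,l)}(\gammav^{(k,l')})^\top\ve{\El{l'}}^\top$ with $l\neq l'$; taking $l>l'$ and conditioning on $\Fc_{l'+1}$ (which makes everything except $\El{l'}$ measurable) kills each cross term via $\Econd{\El{l'}}{\Fc_{l'+1}}=0$. After reindexing $\sum_{k=1}^{L}\sum_{l=k}^{L}=\sum_{l=1}^{L}\sum_{k=1}^{l}$ and rewriting each surviving diagonal term $\Econd{\norm{\ve{\El{l}}\gammav^{(k,l)}}_2^2}{\Fc_{l+1}}$ as $\VarFcond{\ve{\qb{\hn_{\Hl{l}}}}\gammav^{(k,l)}}{\hn_{\Hl{l}}}$, the tower property gives the claimed equality~(\ref{eqn:variance-general}).

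For the bound~(\ref{eqn:variance-upper-bound}) I would use submultiplicativity of the operator norm, $\norm{\ve{\El{l}}\gammav^{(k,l)}}_2\le\norm{\El{l}}_F\norm{\gammav^{(k,l)}}_2$, so that each diagonal term is at most $\VarFcond{\qb{\hn_{\Hl{l}}}}{\hn_{\Hl{l}}}\,\norm{\gammav^{(k,l)}}_2^2$ (here $\gammav^{(k,l)}$ is constant under the conditioning and $\E{\norm{\El{l}}_F^2\mid\hn_{\Hl{l}}}=\VarFcond{\qb{\hn_{\Hl{l}}}}{\hn_{\Hl{l}}}$); summing over $k$ factors out the common variance. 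I expect the main obstacle to be the cross-term cancellation: one must fix the right nested conditioning so that the later-revealed noise is integrated last, and must justify that $\gammav^{(k,l)}$ is non-random given $\Fc_{l+1}$ (a consequence of deterministic forward propagation) before the martingale property can be invoked. Everything else is bookkeeping.
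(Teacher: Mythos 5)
Your proof is correct, and it reaches the paper's decomposition by a somewhat different route. The paper never writes the gradient error explicitly; instead it applies the law of total variance recursively, peeling off one quantizer at a time: conditioning $\VarFcond{\ve{\hn_{\Thetal{l}}}}{\Bc}$ on $\hn_{\Hl{l}}$ splits off the term $\E{\VarFcond{\ve{\qb{\hn_{\Hl{l}}}}\Kl}{\hn_{\Hl{l}}}}$ plus a residual $\E{\VarFcond{\ve{\hn_{\Hl{l}}}\Kl}{\Bc}}$, which is then expanded at layer $l+1$, and so on up to $L$; summing over $l$ and swapping the order of summation gives Eq.~(\ref{eqn:variance-general}), and the bound Eq.~(\ref{eqn:variance-upper-bound}) follows from the same inequality $\VarF{\Xv\Wv}\le\VarF{\Xv}\norm{\Wv}_2^2$ that you invoke (the paper's Proposition~\ref{prop:cauchy}). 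You instead unroll the backward recursion into the closed-form error $\ve{\hn_{\Thetal{k}}}-\ve{\nabla_{\Thetal{k}}}=\sum_{l=k}^{L}\ve{\El{l}}\gammav^{(k,l)}$ (your indexing and the empty-product convention $\gammav^{(k,k)}=\Kv^{(k)}$ check out) and kill the cross terms by the martingale-difference property of the noises $\El{l}$ --- which is mathematically the same cancellation that the iterated law of total variance performs, since each recursive conditioning step in the paper is exactly one orthogonality-of-increments step. What your version buys: the explicit error formula is more transparent about which noise propagates through which Jacobian product, and your filtration $\Fc_{l+1}$ makes precise a point the paper leaves implicit --- its conditional variances $\VarFcond{\cdot}{\hn_{\Hl{l}}}$ only make sense if the conditioning also fixes $\Bc$ (hence $\Jl$, $\Kl$, and $\gammav^{(k,l)}$), which your $\sigma$-algebra does by construction. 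What the paper's version buys: it avoids the measurability bookkeeping by structuring everything as nested applications of a single black-box lemma. One caveat applying equally to both proofs: $\Econd{\El{l}}{\Fc_{l+1}}=0$ requires the quantizer's internal randomness at layer $l$ to be fresh, i.e., independent of upstream randomness given its input $\hn_{\Hl{l}}$; this holds for stochastic rounding and is implicitly assumed by the paper as well, but is worth stating if you write your argument out in full.
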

Intuitively, Eq.~(\ref{eqn:variance-general}) decomposes the variance in the FQT gradient into each of its $L+1$ sources of randomnesses. Particularly, the first term $\VarF{\nabla_{\Thetav}}$ is the variance of the QAT gradient, which considers the variance from subsampling the batch $\Bc$. 
All the remaining variance terms  come from gradient quantization, where each term $\small\VarFcond{\ve{\qb{\hn_{\Hl{l}}}} \gammav^{(k,l)} }{\hn_{\Hl{l}}}$ is the variance imposed by the quantizer $\small\qb{\hn_{\Hl{l}}}$ on layer $l$ to the gradient $\small\hn_{\Thetav^{(k)}}$ on layer $k$. 
For example, in a 3-layer network, consider the variance of the first layer parameter gradient $\small\hn_{\Thetav^{(1)}}$, whose computational subgraph is illustrated in Fig.~\ref{fig:decomposition}(a). 
The \emph{gradient variance} $\small\VarFcond{\hn_{\Thetav^{(1)}}}{\Bc}$ is challenging to analyze since $\hn_{\Thetav^{(1)}}$ is affected by all the three quantizers $\small Q_1, Q_2, Q_3$, which entangle with the computing operations.
Eq.~(\ref{eqn:variance-general}) disentangles the variance introduced by each quantizer, where the term  $\small\ve{\qb{\hn_{\Hl{l}}}} \gammav^{(1,l)}$ computes $\small\Thetav^{(1)}$ based on $\small\hn_{\Hl{l}}$, as if there is only one quantizer $\small\qb{\hn_{\Hl{l}}}$ along the path (Fig.~\ref{fig:decomposition}(b)-(d)). 
The variance of $\small\ve{\qb{\hn_{\Hl{l}}}} \gammav^{(1,l)}$ is simpler to analyze since it is only a linear transformation of the quantized value. Particularly, we can upper bound the variance with  Eq.~(\ref{eqn:variance-upper-bound}). The bound only depends on the \emph{quantizer variance} $\small\VarFcond{\qb{\hn_{\Hl{l}}}  }{\hn_{\Hl{l}}}$, weighted by the deterministic term $\small\sum_{k=1}^l\norm{\gammav^{(k,l)}}_2^2$. Therefore, Theorem~\ref{thm:variance} reduces the complicated problem on \emph{gradient variance} into the simple problem on \emph{quantizer variance}.

\subsection{Case Study: FQT with Per-tensor Quantizer}
We now analyze the quantizer variance for a specific \emph{per-tensor quantizer} (PTQ), which is widely adopted in existing INT8 training approaches~\cite{banner2018scalable,zhu2019towards}. The quantizer is defined as
$$
\small
\qb{\hn_{\Hl{l}}}=\SR\left( \ssl(\hn_{\Hl{l}}-\zl) \right)/\ssl + \zl, \mbox{ where } \SR(X)=\begin{cases}
\ceil{X} & \mbox{with prob. } X-\floor{X}\\
\floor{X} & \mbox{otherwise}
\end{cases}.$$
If the bitwidth is $b$-bits, the affine transformation $\ssl(\hn_{\Hl{l}}-\zl)$ maps the gradient to an interval $[0, B]$ of $B=2^b-1$ bins. 
It is then rounded to integers in $[B]$ by the stochastic rounding~\cite{courbariaux2015binaryconnect} operation $\SR(\cdot)$.  Finally, the quantized value is mapped back by an inverse transformation, which is often carried out implicitly. 
We take the zero point $\small\zl=\min \hn_{\Hl{l}}$ and the scale $\small\ssl=B/R(\hn_{\Hl{l}})= B/R(\Xv)$, where $\small R(\Xv)=\max \Xv - \min \Xv$ is often referred as the \emph{dynamic range} of $\small\Xv$. 
Since the SR operation is applied independently to each entry, we have the quantizer variance 
\begin{align}
\label{eqn:variance-scale-shift}
\hspace{-.5cm}\small
\VarFcond{\qb{\hn_{\Hl{l}}}  }{\hn_{\Hl{l}}}
=\frac{1}{(\ssl)^2}\VarFcond{ \SR\left(\cdot\right)}  {\hn_{\Hl{l}}}
\le \frac{N D^{(l)}}{4B^2}  R(\hn_{\Hl{l}})^2,
\end{align}
where the variance of stochastic rounding reaches the maximum $1/4$ when the input falls to the center of a bin. Combining Eq.~(\ref{eqn:variance-scale-shift}) and Eq.~(\ref{eqn:variance-upper-bound}), we can bound the gradient variance by 
\begin{align}\label{eqn:pqt-variance}
\small
\VarF{\hn_{\Thetav}}\le \VarF{\nabla_{\Thetav}}+
\frac{N }{4B^2}
\sum_{l=1}^{L} 
D^{(l)} \E{R(\hn_{\Hl{l}})^2
\sum_{k=1}^l\norm{\gammav^{(k,l)}}_2^2} .
\end{align}
This gives us some insight on the impact of gradient bitwidth to the variance. Particularly, when the bitwidth $b$ 
is high ($B=2^b-1$), the second term is negligible compared with the variance of the \qat gradient itself. In this case, we can reduce the gradient bitwidth for free. As $b$ getting smaller, the quantization variance domainates, and each less bit increase the variance by 4x. The rapid increase of variance makes it very challenging for existing INT8 training approaches to work for lower bitwidth.

\section{Variance Reduced Quantizers}


To reduce gradient variance, we propose a new family of gradient quantizers that have smaller variance than existing PTQ. 
Particularly, we extend PTQ with a scale \emph{matrix} and a zero point \emph{vector}: 
\begin{align}\label{eqn:matrix-quantizer}
\small
\qb{\hn_{\Hl{l}}}=\Slinv \SR\left( \Sl(\hn_{\Hl{l}}-\Zl) \right) + \Zl,
\end{align}
where $\small\Sl$ is a $N\times N$ matrix, $\small\mathbf{1}$ is a row vector, and $\small\zv^{(l)}$ is a column vector. 
This quantizer scales and rotates the rows of the gradient, and reduces to the PTQ when $\small\Sl=\ssl \Iv$ and $\small\zv^{(l)}=\mathbf{1}^\top\zl$. 
The variance of this quantizer is 
$
\small
\VarFcond{\qb{\hn_{\Hl{l}}}  }{\hn_{\Hl{l}}}\le \frac{D^{(l)}}{4} \norm{\Slinv}_F^2.
$
This can be minimized as
\begin{align}\label{eqn:optimizing-variance}
\small
\min_{\Sl} \norm{\Slinv}_F^2, \mbox{    s.t. } R(\Sl \hn_{\Hl{l}})\le B, 
\end{align}
where the constraint ensures that the inputs are mapped within $[0, B]$. The derivation of variance for all quantizers in this section can be found in Appendix~\ref{sec:appendix-variance}. 

\subsection{Per-sample Quantizer}
We introduce \emph{per-sample quantizer} (PSQ), which addresses the large variation of dynamic range across samples. PSQ takes $\small\Sl=\mbox{diag}\{s_1, \dots, s_N\}$ as a diagonal matrix. 
Note that the activation gradient $\small\hn_{\Hl{l}}$ is an $N\times D^{(l)}$ matrix, where each row is a sample and each column is a feature. 
In this case, $\small\Sl(\hn_{\Hl{l}}-\Zl)$ can be thought as applying different scale per each sample (row) of the gradient $\small\hn_{\Hl{l}}$. PSQ takes $\small O(ND^{(l)})$ FP32 operations to compute the affine transformation, and it is already available in the FBGEMM library~\cite{fbgemm}. 
Solving problem (\ref{eqn:optimizing-variance}) gives the optimal transformation $\small s_i = B/R(\hn_{\hl{l}})$, where $\small R(\hn_{\hl{l}})=\max \hn_{\hl{l}}-\min \hn_{\hl{l}}$ is the dynamic range of the $i$-th row. Dynamic range is important since it directly affects the quantization bin size, and hence the gradient variance.
The variance of PSQ  is 
$
\small
\VarFcond{\qb{\hn_{\Hl{l}}}  }{\hn_{\Hl{l}}}\le \frac{D^{(l)}}{4B^2} \sum_{i=1}^N R(\hn_{\hl{l}})^2.
$
Since $\small R(\hn_{\Hl{l}})=\max_i R(\hn_{\hl{l}})$, PSQ has smaller variance than PTQ  (Eq.~\ref{eqn:variance-scale-shift}). 

The variance reduction of PSQ relative to PTQ is significant, due to the \emph{sparsity of gradient}. Consider a cross-entropy loss
$
\small
\ell(\Hl{L}, \Yv) = \sum_{i=1}^N\sum_{c=1}^C  Y_{ic}\log \mathrm{Softmax}(\hl{L})_c$ with the gradient $\small\hn_{\hl{L}} = \yv_{i} - \mathrm{Softmax}(\hl{L}).$
If a sample $i$ is correctly classified, $\mathrm{Softmax}(\hl{L})$ should be close to $\yv_{i}$. Since the \emph{training} accuracy of DNNs is usually near 100\%, the dynamic range $R(\hn_{\hl{L}})$ should be close to zero for most samples except some outliers, as illustrated in Fig.~\ref{fig:gradient-histogram}. In this case, PTQ is very ineffective since the quantization range is unnecessarily large for correctly classified samples.

\subsection{Block Householder Quantizer}

Quantization treats all samples equally, 
yet often only a few samples (rows) of the gradient, $\hn_{\Hl{L}}$, are significant, and the rest waste precious bits to encode zeros.
We want a mechanism to spread the ``signal'' in informative samples across the entire gradient representation.
Consider an extreme case when all rows except the first row of $\hn_{\Hl{L}}$ are close to zero. That is, let $\lambda_1=R(\hn_{\hv^{(L)}_1})$, $\lambda_2=2\max_{i\ne 1} \norm{\hn_{\hl{L}}}_{\infty}$, and assume $\lambda_2/\lambda_1\approx 0$. In this case, all the rows other than the first row carry almost no information, since they are too small relative to the first row. However, they consume $(N-1)/N$ of the total computation.

To utilize the wasted bits in the last $\small N-1$ rows, we construct $\small\Sl=\Qv \mbox{diag}(s_1, s_2, \dots, s_2)$, where $\small\Qv=\Iv-2\nv\nv^\top/\norm{\nv}_2^2$ is a Householder reflection with the normal vector $\small\nv=\mathbf{1}/\sqrt{N}-\ev_1$. Intuitively, $\small\Qv$ maps a coordinate vector $\ev_1$ to an all-one vector $\small\mathbf{1}/\sqrt{N}$, spreading out the large numbers in the first row evenly into other rows. Taking
$\small s_1\propto \lambda_1^{-1/3}N^{1/6}$ and $\small s_2\propto \lambda_2^{-1/3}N^{1/6}$, 
we have 
\begin{align*}
\small
\VarFcond{\qb{\hn_{\Hl{l}}}}{\hn_{\Hl{l}}} \le \frac{\Dl{l}}{4B^2}\left(\lambda_1^{2/3}N^{-1/3}+\lambda_2^{2/3}N^{2/3}\right)^3 
\approx \frac{\Dl{l}}{4B^2}\lambda_1^{2}N^{-1} = O(\lambda_1^{2}/N).
\end{align*} 
Additional rows now \emph{reduce} the variance, since they alleviate the burden of the first row. For comparison, PTQ has $O(N\lambda_1^2)$ variance and PSQ has $O(\lambda_1^2)$ variance in this special case. 

We extend this idea to the general case as the \emph{block Householder quantizer} (BHQ). Specifically, we partition the rows into several groups. Within each group, only one row is large, and all the other rows are small. We apply the aforementioned Householder transformation separately for each group. The resultant scale matrix $\Sl$ is a block diagonal matrix, where each block is the product of a Householder matrix and a diagonal matrix. Again, BHQ takes $O(ND^{(l)})$ FP32 operations to implement. More details on the construction of BHQ are described in Appendix~\ref{sec:appendix-BHQ}. 

\subsection{Computational Overhead and Implementation}
Here, we discuss the computational overhead and implementation details of our proposed quantizers. The actual time cost is highly platform-specific, and a complete hardware-algorithm co-design is out of the scope of this paper, which mostly focuses on the theoretical properties of gradient quantization. As a representative example, we investigate the quantization overhead for a $(N=128, C=64, H=W=56)$ convolutional layer in INT8 on a single Intel CPU core, using a CPU version of TensorFlow~\cite{abadi2016tensorflow} compiled with AVX support. In this case, the actual convolution takes 480ms. For quantization, we firstly need to find the dynamic range, which involves a per-tensor reduction of maximum and minimum elements for PTQ, and a per-sample reduction for PSQ and BHQ. Computing the range takes 11ms for PTQ and 24ms for PSQ and BHQ.  The block Householder transformation can be implemented by two sparse-dense matrix multiplications. Suppose there are $G\le N$ groups, each operation involves multiplying the gradient $\hn_{\Hl{l}}$ by a $G\times N$ sparse matrix with $N$ non-zero elements. These matrix multiplications take $2ND^{(l)}$ FLOPs, or 21ms in total. Finally, we implement a custom C++ routine to find the optimal transformation and construct the sparse matrices for BHQ, which takes 3us. In general, the overhead for all the quantizers is small relative to the convolution. 

Finally, dedicated hardware implementations may involve more subtleties. Instead of per-sample FP32 ranges, hardware may favor a per-tensor range with per-sample shift values ([0, 4] is enough according to our experience), where the accumulators are shifted before the final summation. We leave this as future work.

\section{Empirical Evaluation}\label{sec:experiments}

We demonstrate our theoretical results with an empirical evaluation on training ResNets~\cite{he2016deep} for image classification and transformers~\cite{vaswani2017attention} for machine translation. Specifically, we use ResNet56-v2~\cite{he2016identity} on the CIFAR10~\cite{krizhevsky2009learning} dataset, ResNet18/ResNet50~\cite{he2016deep} for the ImageNet~\cite{deng2009imagenet} dataset, and a transformer implemented in the Fairseq library~\cite{ott2019fairseq} for the IWSLT14' En-DE machine translation dataset. 
The compared approaches are full-precision training (referred to as ``exact''), QAT, and FQT. For QAT and FQT, we follow the settings on INT8 training~\cite{banner2018scalable}. 
Particularly, the activation and weights are quantized with PTQ to 8 bits. 
The gradient is quantized either with the baseline PTQ~\cite{banner2018scalable} or with our proposed PSQ and BHQ. 
In our setup, we keep the baseline PTQ mostly the same as~\cite{banner2018scalable}, but we use batch normalization instead of range batch normalization. 
Furthermore, when computing the weight gradient $\small\hn_{\Thetal{l}}$, we quantize the activation gradient $\small\hn_{\Hl{l}}$ to 8-bit instead of leaving it in full precision. 
On the machine translation task, we only quantize all the linear layers for simplicity. 
Detailed experimental setup can be found in Appendix~\ref{sec:appendix-experiment}.

\subsection{Variance}
Here, we first link the convergence properties of the algorithm with the gradient variance on the CIFAR10 dataset in Fig.~\ref{fig:cifar10}.
This illustrates that the quantization variance depends on the type of quantizer, as well as the bitwidth. 
Each fewer bit roughly increases the quantization variance by 4x, which aligns with our theoretical result. 
Moreover, BHQ achieves a similar variance as PTQ, with 3 fewer bits. 
When the quantization variance is relatively small (within 10\% of the QAT variance), gradient quantization does not add much variance to the QAT gradient. This corresponds to PTQ with $\ge 7$ bits, PSQ with $\ge 5$ bits, and BHQ with $\ge 4$ bits. According to Fig.~\ref{fig:cifar10}(b)(c), the validation accuracy degradation is small (within $0.4\%$) in these settings. On the other hand, the variance of PTQ is much larger than other quantizers, so its validation accuracy quickly decays and eventually diverges as the number of bits falls below 6. Therefore, gradient variance directly impacts the convergence. 

\begin{figure}[t]
\centering
\begin{minipage}{4.5cm} \includegraphics[width=\linewidth]{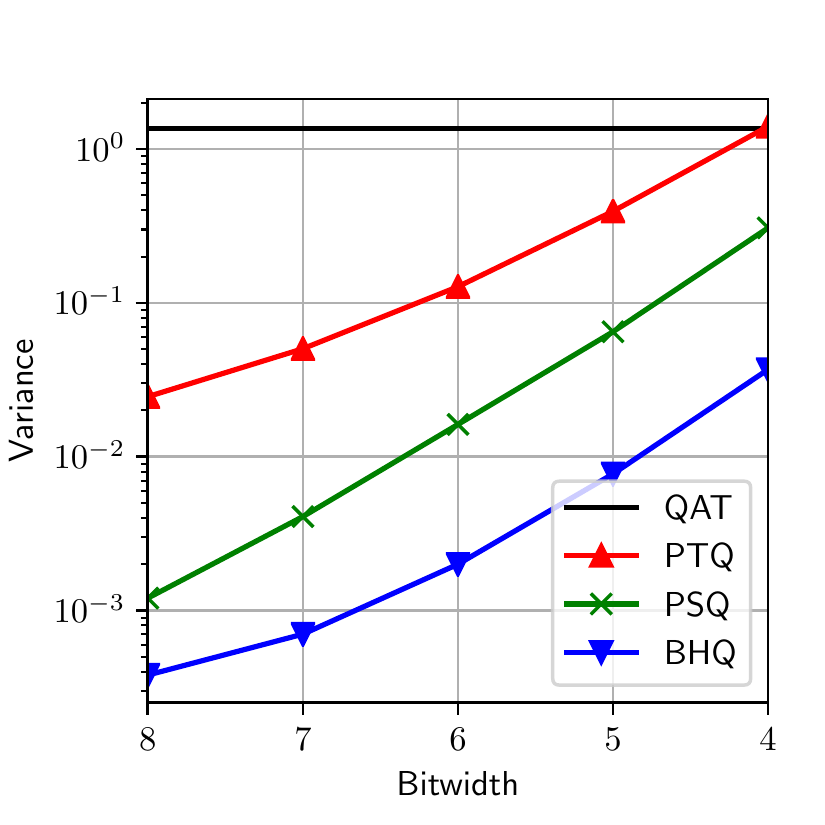}\end{minipage} %
\begin{minipage}{4.5cm} \includegraphics[width=\linewidth]{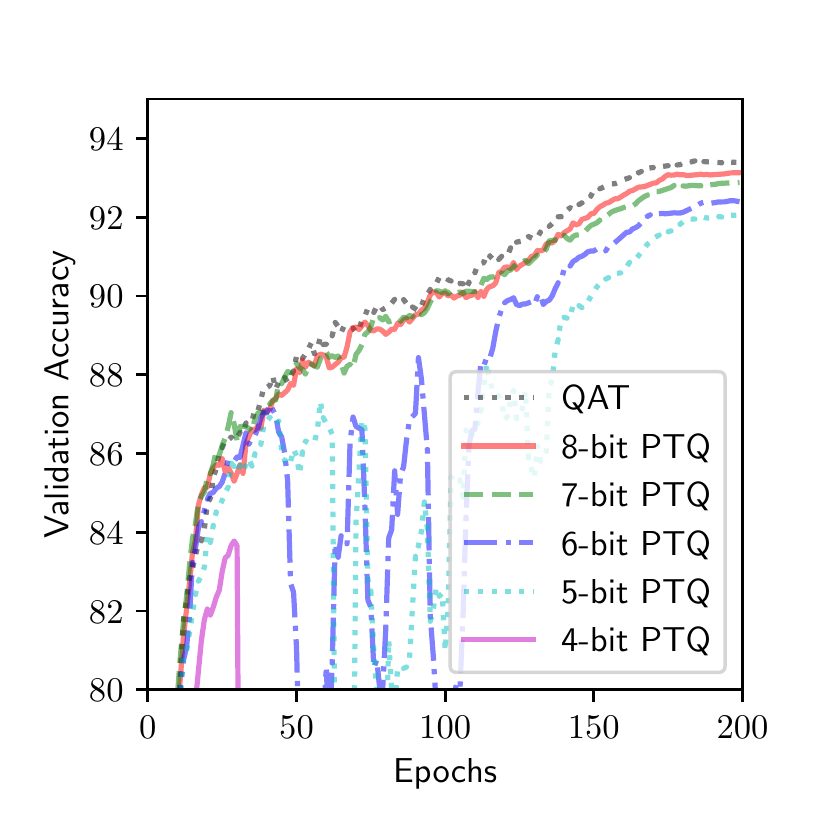}\end{minipage} %
\begin{minipage}{4.5cm}
\begingroup
\setlength{\tabcolsep}{3pt} 
\footnotesize{
    \begin{tabular}{cccc}
    \toprule
    Setting     & \texttt{PTQ} & \texttt{PSQ} & \texttt{BHQ} \\
    \midrule
    Exact     & 93.35   & -- & -- \\
    QAT     & 93.33  & -- & -- \\
    \midrule
    8-bit FQT & 93.19  & 93.22  & 92.84  \\
    \hc 7-bit FQT & 92.91  & 93.14  & 93.17 \\
    6-bit FQT & 92.34  & 93.36  & 93.23  \\
    \hc 5-bit FQT & 92.13  & 93.21  & 93.30  \\
    4-bit FQT & diverge & 92.86  &93.31  \\
    \bottomrule
    \end{tabular}}
\endgroup
\end{minipage}\\
\begin{minipage}{4.5cm} \centering(a) \end{minipage}
\begin{minipage}{4.5cm} \centering(b) \end{minipage}
\begin{minipage}{4.5cm} \centering(c) \end{minipage}
\caption{CIFAR10 convergence results. (a) Impact of gradient quantizer and bitwidth to gradient variance. (b) Convergence curve for PTQ. (c) Testing accuracy.\label{fig:cifar10}}
\end{figure}

\begin{figure}
    \centering
\begingroup
\setlength{\tabcolsep}{0pt} 
\renewcommand{\arraystretch}{0} 
\scriptsize
\begin{tabular}{c|ccc}
$\hn_{\hv^{(l)}_1}$ & Quantized Gradient (PTQ) & Quantized Gradient (PSQ) & Quantized Gradient (BHQ) \vspace{-0.2cm} \\
\includegraphics[width=.24\linewidth]{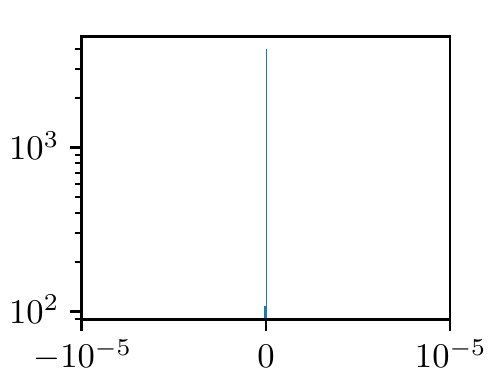}  & 
\includegraphics[width=.24\linewidth]{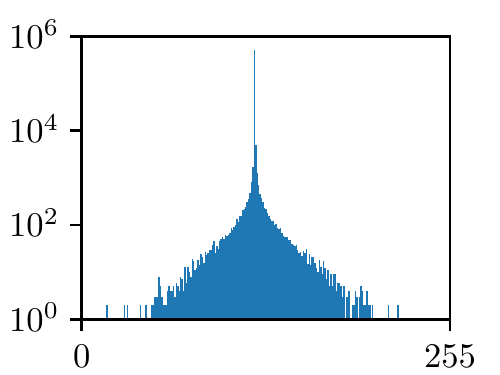} &
\includegraphics[width=.24\linewidth]{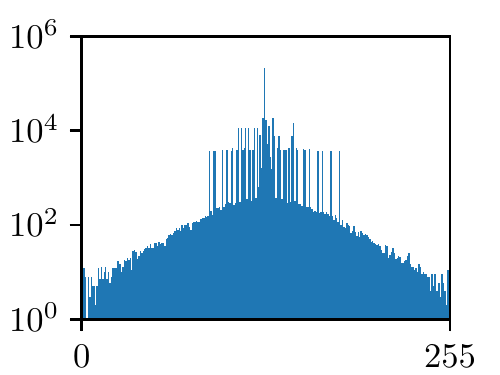} & 
\includegraphics[width=.24\linewidth]{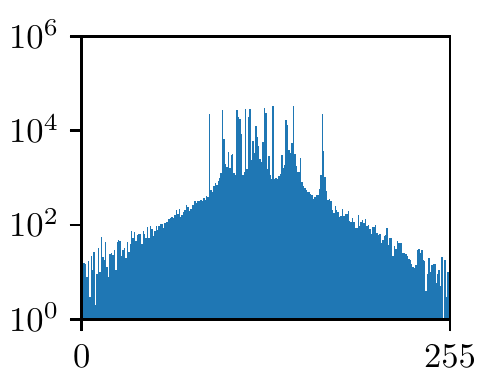} \\
$\hn_{\hv^{(l)}_2}$ & Bin Sizes (PTQ) & Bin Sizes (PSQ) & Bin Sizes (BHQ) \vspace{-0.2cm} \\
\includegraphics[width=.24\linewidth]{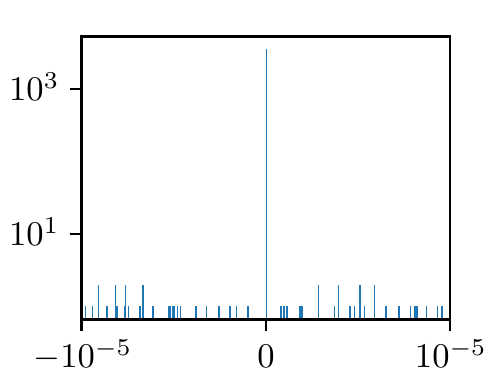} & 
\includegraphics[width=.24\linewidth]{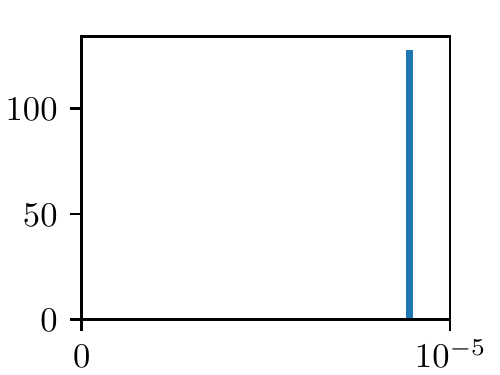} &
\includegraphics[width=.24\linewidth]{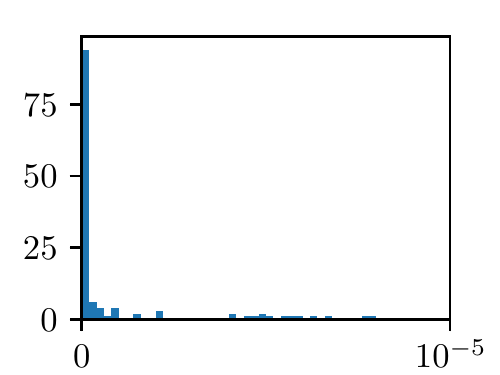} &
\includegraphics[width=.24\linewidth]{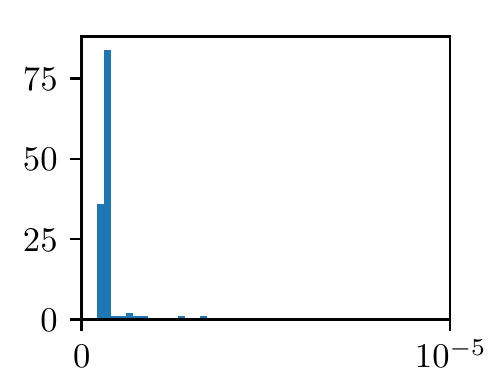} \\
\end{tabular}
\endgroup
    \caption{\small{Histogram of gradients and quantization bin sizes. First row: 
PSQ and BHQ utilize more tail bins than PTQ. Second row: While all the quantization bins for PTQ are large, PSQ reduces the size of most bins and BHQ further eliminates all the large bins by spreading their values into smaller bins.}   \label{fig:gradient-histogram}
} 
\end{figure}

\subsection{Variance Reduced Quantizers}

Here, to illustrate the variance reduction effect of our gradient quantizers, we visualize the gradient at the \texttt{conv\_3\_5\_2} layer on CIFAR-10 at the 100-th epoch in Fig.~\ref{fig:gradient-histogram}. The quantizer variance $\small\VarFcond{\qb{\hn_{\Hl{l}}}  }{\hn_{\Hl{l}}}$ is $1.2\times 10^{-3}$ for PTQ, $8.1\times 10^{-5}$ for PSQ, and $1.4\times 10^{-5}$ for BHQ.
The gradients are quantized to $B=255$ bins, and the histogram of the quantized gradients $\small\SR\left( \Sl(\hn_{\Hl{l}}-\Zl) \right)$ are visualized in the first row of the right panel. 
In addition, we visualize the distribution of bin sizes in the second row, which is the actual numerical range that each quantization bin represents. 
Noticing that the bin size of PSQ is proportional to the dynamic range $\small R(\hn_{\hl{L}})$ of each row, we can observe that the dynamic range is close to zero for most correctly classified samples, but it is large for a few outliers. 
More concretely, we plot the histogram of the gradient of a correctly classified sample $\small\hn_{\hv^{(l)}_1}$ and an outlier $\small\hn_{\hv^{(l)}_2}$ on the left panel. 
This clearly shows that the gradient concentrates at zero for the correctly classified sample. 

Since PTQ uses a single scale for the entire gradient, the quantized gradient is zero for most entries, showing a spike in the gradient histogram, and the utilization of other quantization bins are very low. The variance of PTQ is large, since it adopts a huge bin size across all entries. PSQ solves this problem by using separate scales for each sample, avoiding the unnecessarily large bin sizes for correctly classified data. As a result, its gradient histogram is much flatter, implying a better utilization of the bins on the tail. However, there are still some large quantization bins for the outliers. With the Householder transformation, BHQ splits the large gradients of outliers into other correctly classified samples. Therefore, the largest bin size of BHQ is much smaller than that of PSQ, at the expense of slightly increasing the bin size of correctly classified samples.

\subsection{ImageNet Results}
Here, we report both validation accuracy and training loss on ImageNet in Table~\ref{tab:resnet18/50_validation_loss} (convergence curves are in Appendix~\ref{sec:appendix-additional-results}). 
On ResNet18, our BHQ with a 5-bit gradient achieves $\le 0.4\%$ validation accuracy degradation,  comparable with the baseline BTQ with a 7-bit gradient. 
On the more challenging ResNet50, our PSQ and BHQ with an 8-bit gradient have indistinguishable results compared with QAT, while PTQ suffers from $\sim 1\%$ accuracy degradation. 
The accuracy degradation is still within $0.4\%$ for PSQ and BHQ with a 6-bit gradient. Our BHQ with 5-bit gradient performs as well as the baseline PTQ with an 8-bit gradient. 
Both PSQ and BHQ converge even with a 4-bit gradient, while PTQ diverges. Interestingly, the gain of BHQ and PSQ on ResNet50 is higher than that on ResNet18. We suspect it is due to the higher training accuracy on ResNet50, which makes the gradient sparser. 
We also compare our result with existing 8-bit training works in Table~\ref{tab:benchmark} in an end-to-end fashion. 
These results demonstrate that BHQ establishes a new state-of-the-art on this benchmark task.

\subsection{Machine Translation}
Here, we report the validation BLEU score and the gradient variance on the machine translation task in Fig.~\ref{fig:nmt}.  While the gradient variance increases exponentially as the bitwidth decreases, our BHQ and PSQ consistently achieves lower variance than the vanilla PTQ. Specifically, the gradient variance for 5-bit BHQ is roughly the same as that for 8-bit PTQ. In terms of validation BLUE score, while all the three gradient quantizers work well with 8-bit gradients, the vanilla PTQ diverges with 5-bit gradients. Meanwhile, BHQ still achieves a BLUE score within 1\% degradation comparing with QAT. These observations are the same as those for image classification, indicating the general applicability of our~approach.

\begin{table}
\begin{minipage}[t]{0.73\linewidth}
\caption{\footnotesize 
ResNet18/50 validation accuracy (training loss) on ImageNet.
}
\label{tab:resnet18/50_validation_loss}
\begin{adjustbox}{width=1\textwidth,right} 
{\begin{tabular}{lccc|ccccccccccccccccc}
\toprule
\multirow{2}{*}{\textbf{Setting}} & \multicolumn{3}{c}{\textbf{ResNet18}} & \multicolumn{3}{c}{\textbf{ResNet50}}      \\
\cmidrule{2-4} \cmidrule{5-7}
                                & \texttt{PTQ} &\texttt{PSQ} &\texttt{BHQ}  & \texttt{PTQ} &\texttt{PSQ} &\texttt{BHQ}              \\ 
\midrule
Exact     & 71.21 (2.20)  & -- & --    & 77.09 (1.75)  & -- & -- \\
QAT       & 71.36 (2.25)  & -- & --    & 77.35 (1.78)  & -- & -- \\
\midrule
8-bit FQT & \textbf{71.24} (\textbf{2.25}) & 70.92 (\textbf{2.25}) & 71.15 (\textbf{2.25}) & 76.40 (1.81) &  \textbf{77.40} (\textbf{1.77}) & 77.36 (\textbf{1.77}) \\
\hc 7-bit FQT & 70.95 (2.26) & \textbf{71.00} (\textbf{2.25}) & 70.85 (\textbf{2.25}) & 76.62 (1.80) & \textbf{77.36} (\textbf{1.77}) &  76.96 (1.78) \\
6-bit FQT & 70.73 (2.27) & 70.86 (\textbf{2.26}) & \textbf{71.01} (\textbf{2.26}) & 76.06 (1.84) &76.97 (1.79)  & \textbf{77.25} (\textbf{1.78}) \\
\hc 5-bit FQT & 70.30 (2.30) & 70.57 (2.29) & \textbf{70.98} (\textbf{2.27})             & \textbf{74.62} (\textbf{1.93}) & 76.30 (1.85) & \textbf{76.83} (\textbf{1.81}) \\
4-bit FQT & 68.70 (2.39) & 69.05 (2.39) &  \textbf{69.48} (\textbf{2.35})          & diverge &  73.78 (2.04)  & \textbf{74.75} (\textbf{1.96}) \\
\bottomrule
\end{tabular} } 
\end{adjustbox}
\end{minipage}
\begin{minipage}[t]{0.265\linewidth}
    \caption{\footnotesize{8-bit training results for ResNet50.}}
    \label{tab:benchmark}
    \begin{adjustbox}{width=1\textwidth,left} 
    \begin{tabular}{cc}
    \toprule
    \textbf{Method} & \textbf{Val. acc.} \\
    \midrule
    FP8~\cite{wang2018training} & 71.72 \\
    HBFP8\_16~\cite{drumond2018training} &  76.12 \\
    HFP8~\cite{sun2019hybrid} & 76.46 \\
    WAGEUBN~\cite{yang2020training} & 69.07 \\
    Unified INT8~\cite{zhu2019towards} & 76.34 \\
    \midrule
    \hc BHQ (ours) & \textbf{77.36} \\
    \bottomrule
    \end{tabular}
    \end{adjustbox}
\end{minipage}
\end{table}

\begin{figure}[t]
\centering
\begin{minipage}{4.5cm} \includegraphics[width=\linewidth]{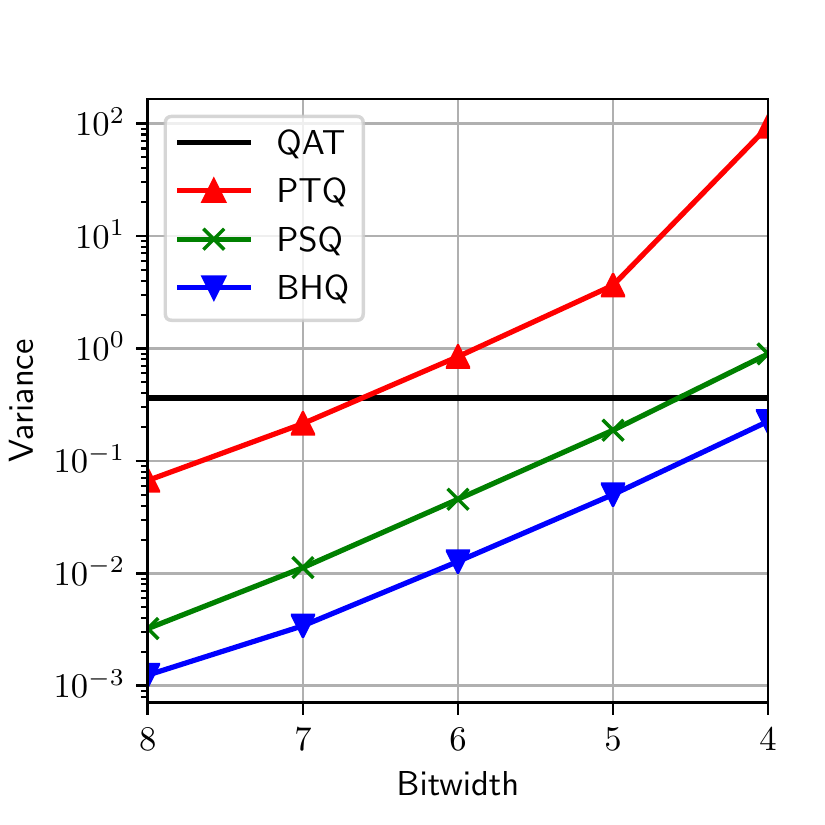}\end{minipage} %
\begin{minipage}{2cm} \end{minipage}
\begin{minipage}{4.5cm}
\begingroup
\setlength{\tabcolsep}{3pt} 
\footnotesize{
    \begin{tabular}{cccc}
    \toprule
    Setting     & \texttt{PTQ} & \texttt{PSQ} & \texttt{BHQ} \\
\midrule
Exact & 34.55 & -- & -- \\
QAT & 34.47 & -- & -- \\
\midrule
8-bit & 34.33 & 34.39 & \textbf{34.51} \\
\hc 5-bit & 0.02 & 33.17 & \textbf{33.70} \\
    \bottomrule
    \end{tabular}}
\endgroup
\end{minipage}\\
\begin{minipage}{4.5cm} \centering(a) Gradient variance \end{minipage}
\begin{minipage}{2cm}\end{minipage}
\begin{minipage}{4.5cm} \centering(b) Validation BLEU score \end{minipage}
\caption{Machine translation results on the IWSLT14' En-DE dataset. PSQ and BHQ achieve significantly lower gradient variance than PTQ, and converge even with a 5-bit gradient. \label{fig:nmt}}
\end{figure}



\section{Conclusions}
We present a framework for FQT algorithms. Our framework assumes deterministic forward propagation and unbiased stochastic gradient quantizers. 
We formulate the FQT gradient as a stochastic estimator of the QAT gradient, and we derive its bias and variance, which impacts the convergence behavior of the training algorithm. 
To the best of our knowledge, this is the first result on how different gradient quantization schemes impact the gradient quality, without making strong assumptions such as single-layer network. 
Inspired by these theoretical results, we propose two novel gradient quantizers, PSQ and BHQ, and we demonstrate empirically that they have significantly lower variance than existing PTQ. 
Particularly, 5-bit BHQ performs as well as 8-bit PTQ for training ResNet50. 
There are many possible future directions based on this framework.
Perhaps the most promising direction includes setting the gradient precision per layer adaptively, based on the variance; developing novel floating-point formats and vector quantizers; and developing theoretically inspired learning rate~schedules. 

\clearpage
\section*{Broader Impact}

Fully quantized training, including our work, can be potentially used to reduce the cost (and thus, for example, the carbon footprint) of training large deep neural networks. In recent years, huge models such as EfficientNet-B7~\cite{tan2019efficientnet}, BERT~\cite{devlin2018bert}, GPT-2~\cite{radford2019language} and GPT-3~\cite{brown2020language} have achieve impressive results in many areas, particularly in natural language processing. However, these models are becoming prohibitively expensive to train. For example, the GPT-3 model takes 3,640 petaflops-days to train~\cite{brown2020language}, while a V100 GPU only has $\sim$15 teraflops single precision throughput. Training is necessary when, for example, adapting to a new language. The prohibitive training time makes machine learning research and potential applications increasingly rely on these amounts of computational resources, and it is thus increasingly inaccessible and unfair. The low-bitwidth quantizers presented in this paper can potentially reduce the cost of training neural networks, making state-of-the-art machine learning more democratized. 
Fully quantized training may also be applied for training on edge devices. Due to the high energy cost, training is not yet widely done on edge devices. Using the energy-efficient low-bitwidth hardware, the techniques proposed in this paper can potentially help move training towards edge. Training on edge enables new applications, such as  locally-trained personalized models. Locally trained models improve privacy, as they do not need to upload user information to the cloud. 

\ifisarxiv
\else
\section*{Acknowlegements}
We thank Liam Hodgkinson and Amir Gholami for proofreading and helpful discussions. 
This work was supported by a gracious fund from Intel corporation, Berkeley Deep Drive (BDD), and Berkeley AI Research (BAIR) sponsors. In addition to NSF CISE Expeditions Award CCF-1730628, this research is supported by gifts from Amazon Web Services, Ant Group, CapitalOne, Ericsson, Facebook, Futurewei, Google, Intel, Microsoft, Nvidia, Scotiabank, Splunk and VMware. We would like to thank the Intel VLAB team for providing us with access to their computing cluster. We also gratefully acknowledge the support of NVIDIA Corporation for their donation of two Titan Xp GPU used for this research. 
We would also like to acknowledge the UC Berkeley CLTC, ARO, IARPA, NSF, and ONR for providing partial support of this work. 
Our conclusions do not necessarily reflect the position or the policy of our sponsors, and no official endorsement should be inferred.
\fi

\bibliographystyle{unsrt}
\bibliography{refs.bib}

\newpage

\appendix 

\section{Table of Notations}\label{sec:notation}
\begin{table}[h]
    \centering
    \caption{Table of Notations.}
    \begin{tabular}{cc}
    \toprule
    \textbf{Notation}     &  \textbf{Description} \\
    \midrule
    $\Xv$     & A batch of inputs (each row is a sample) \\
    $\Yv$     & A batch of labels (each row is a sample) \\
    $\Bc$     & A batch $\Bc=(\Xv, \Yv)$ \\
    $N, C, L$    & Batch size, number of classes, and number of layers \\
    $\qa\cdot, \qt\cdot, \qb\cdot$ & activation / parameter / gradient quantizer \\
    $\Fv(\cdot; \Thetav)$ & DNN with parameter $\Thetav$ \\
    $\Fv^{(l)}(\cdot; \Thetav^{(l)})$ & $l$-th layer with parameter $\Thetal{l}$ \\
    $\Hl{l}$ & Activation matrix at layer $l$, whose size is $N\times D^{(l)}$ \\
    $\tHl{l}, \tT{l}$ & Quantized activation / parameter \\
    $\ell(\Hl{L}, \Yc)$ & loss function of prediction $\Hl{L}$ and label $\Yc$. \\
    $\nabla_{\Thetav}\ell$ & Gradient of $\ell$ w.r.t. $\Thetav$ \\
    $\Jl$ & Jacobian matrix $\frac{\partial \ve{\Hl{l}} }{\partial \ve{\tHl{l-1}} }$\\
    $\Kl$ & Jacobian matrix $\frac{\partial \ve{\Hl{l}}}{\partial\ve{{\tilde\Thetav}^{(l)}}}$  \\
    $\nabla_{\Hl{l}}, \nabla_{\Thetal{l}}, \nabla_{\Thetav}$ & QAT gradient for activation / parameter\\
    $\hn_{\Hl{l}}, \hn_{\Thetal{l}}, \hn_{\Thetav}$ & FQT gradient for activation / parameter \\
    $\nabla_{\hl{l}}, \hn_{\hl{l}}$ & $i$-th row of QAT / FQT activation gradient at $l$-th layer\\
    $\Econd{X}{Y}$ & Conditional expectation of $X$ given $Y$ \\
    $\VarFcond{X}{Y}$ & Conditional variance of $X$ given $Y$ \\
    $R(\Xv)$ & Dynamic range of $\Xv$, i.e., $\max\Xv-\min\Xv$\\
    $b, B$ & Number of quantization bits / bins\\
    \bottomrule
    \end{tabular}
    \label{tab:notation}
\end{table}

\section{Preliminary Knowledge}
\begin{proposition}\label{prop:law}
(Law of total variance) 
If $\Xv$ and $\Yv$ are random matrices on the same probability space, and all elements of $\VarF{\Yv}$ is finite, then
$$\VarF{\Yv}=\E{\VarFcond{\Yv}{\Xv}} + \VarF{\Econd{\Yv}{\Xv}}.$$
\end{proposition}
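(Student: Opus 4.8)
The plan is to deduce the matrix identity from the classical (scalar) law of total variance by working coordinate-wise on the vectorized matrix. Recall that the matrix variance is defined entrywise, $\VarF{\Yv}=\sum_i \VarF{\ve{\Yv}_i}$, and likewise $\VarFcond{\Yv}{\Xv}=\sum_i \VarFcond{\ve{\Yv}_i}{\Xv}$. First I would fix a coordinate $i$ and write $y_i := \ve{\Yv}_i$, a scalar random variable with finite variance by hypothesis. For each such $y_i$ the scalar law of total variance gives $\VarF{y_i}=\E{\VarFcond{y_i}{\Xv}}+\VarF{\Econd{y_i}{\Xv}}$.

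Next I would sum this identity over all coordinates $i$. The left-hand side is $\VarF{\Yv}$ by definition. For the first term on the right, linearity of expectation lets me pull the finite sum inside, so $\sum_i \E{\VarFcond{y_i}{\Xv}}=\E{\sum_i \VarFcond{y_i}{\Xv}}=\E{\VarFcond{\Yv}{\Xv}}$. For the second term, I would note that $\Econd{y_i}{\Xv}$ is exactly the $i$-th coordinate of the random matrix $\Econd{\Yv}{\Xv}$, so summing its per-coordinate variances yields $\VarF{\Econd{\Yv}{\Xv}}$ by definition. Combining the three pieces gives the claimed identity.

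There is essentially no serious obstacle here, since the argument is a bookkeeping reduction to the scalar case; the only points requiring care are that the finiteness assumption guarantees the scalar law applies to every coordinate and that all sums are finite so expectation and summation commute. If one instead prefers a self-contained derivation that avoids citing the scalar law, I would expand $\VarF{\Yv}=\E{\norm{\Yv-\E{\Yv}}_F^2}$ after inserting $\pm\Econd{\Yv}{\Xv}$, splitting it into the ``within'' term $\E{\norm{\Yv-\Econd{\Yv}{\Xv}}_F^2}=\E{\VarFcond{\Yv}{\Xv}}$, the ``between'' term $\E{\norm{\Econd{\Yv}{\Xv}-\E{\Yv}}_F^2}=\VarF{\Econd{\Yv}{\Xv}}$, and a cross term. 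The crux of that route is showing the cross term vanishes, which follows from the tower property $\E{\Econd{\cdot}{\Xv}}=\E{\cdot}$ applied after conditioning on $\Xv$, because $\Yv-\Econd{\Yv}{\Xv}$ has conditional mean zero.
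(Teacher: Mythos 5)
Your proof is correct and takes essentially the same route as the paper's: both reduce the matrix identity entrywise to the scalar law of total variance and then sum over coordinates, using linearity of expectation and the entrywise definition of $\VarF{\cdot}$. The only cosmetic difference is that the paper proves the scalar identity inline (expanding $\Eb[Y_{ij}^2]-\Eb[Y_{ij}]^2$ via the law of total expectation) where you cite it as classical.
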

\begin{proof}
By the definition of variance,
\begin{align*}
\VarF{\Yv} = \sum_{ij}\Eb[Y_{ij}^2] - \Eb[Y_{ij}]^2. 
\end{align*}
By law of total expectation,
\begin{align*}
\Eb[Y_{ij}^2] - \Eb[Y_{ij}]^2  &= \Eb[\Econd{Y_{ij}^2}{\Xv}] - \Eb[\Econd{Y_{ij}}{\Xv}]^2\\
&= \Eb[\VarFcond{Y_{ij}}{\Xv} + \Econd{Y_{ij}}{\Xv}^2 ] - \Eb[\Econd{Y_{ij}}{\Xv}]^2 \\
&= \Eb[\VarFcond{Y_{ij}}{\Xv}] + \Eb[\Econd{Y_{ij}}{\Xv}^2]  - \Eb[\Econd{Y_{ij}}{\Xv}]^2\\
&= \Eb[\VarFcond{Y_{ij}}{\Xv}] + \Var{\Econd{Y_{ij}}{\Xv}}.
\end{align*}
Putting it together, we have 
\begin{align*}
\VarF{\Yv} = \sum_{ij}    \Eb[\VarFcond{Y_{ij}}{\Xv}] + \Var{\Econd{Y_{ij}}{\Xv}} = \E{\VarFcond{\Yv}{\Xv}} + \VarF{\Econd{\Yv}{\Xv}}.
\end{align*}
\end{proof}

\begin{proposition}\label{prop:cauchy}
For a random matrix $\Xv$ and a constant matrix $\Wv$,
$$\VarF{\Xv\Wv}  \le  \VarF{\Xv}\norm{\Wv}_2^2.$$
\end{proposition}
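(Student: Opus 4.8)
The plan is to reduce the statement to a deterministic submultiplicativity inequality between the Frobenius norm and the $L_2$ operator norm, and then take expectations. First I would center the random matrix. Setting $\Zv := \Xv - \E{\Xv}$, linearity of expectation gives $\E{\Xv\Wv} = \E{\Xv}\Wv$, so the Frobenius characterization of the variance (the same identity used to define $\VarF{\cdot}$) yields
$$
\VarF{\Xv\Wv} = \E{\norm{\Xv\Wv - \E{\Xv}\Wv}_F^2} = \E{\norm{\Zv\Wv}_F^2},
\qquad
\VarF{\Xv} = \E{\norm{\Zv}_F^2}.
$$
It therefore suffices to establish the \emph{pointwise} bound $\norm{\Zv\Wv}_F^2 \le \norm{\Wv}_2^2\,\norm{\Zv}_F^2$ for every realization of $\Zv$, since taking expectations of both sides then gives the claim directly.

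For the deterministic bound I would decompose the Frobenius norm over the rows $\zv_i$ of $\Zv$. Because the $i$-th row of $\Zv\Wv$ equals $\zv_i\Wv$, we have $\norm{\Zv\Wv}_F^2 = \sum_i \norm{\zv_i\Wv}_2^2$. The key step is the row-wise operator-norm inequality $\norm{\zv_i\Wv}_2 \le \norm{\zv_i}_2\,\norm{\Wv}_2$, which follows from the definition of the $L_2$ operator norm applied to $\Wv^\top$ acting on the column vector $\zv_i^\top$, together with $\norm{\Wv^\top}_2 = \norm{\Wv}_2$. Squaring and summing over $i$ gives $\norm{\Zv\Wv}_F^2 = \sum_i \norm{\zv_i\Wv}_2^2 \le \norm{\Wv}_2^2 \sum_i \norm{\zv_i}_2^2 = \norm{\Wv}_2^2\,\norm{\Zv}_F^2$, which is exactly the pointwise inequality required above.

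I do not anticipate a genuine obstacle here, as this is essentially the standard fact $\norm{AB}_F \le \norm{A}_F\,\norm{B}_2$ combined with the law-of-the-unconscious-statistician style passage to expectations. The only point demanding care is the matrix-vector orientation: the paper uses row-vector conventions with $\Wv$ multiplying on the right, so the operator-norm step must be phrased in terms of $\Wv^\top$ acting on transposed rows, and one should confirm that the centering commutes with right-multiplication by the constant $\Wv$ before invoking the Frobenius identity.
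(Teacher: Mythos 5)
Your proposal is correct and follows essentially the same route as the paper's proof: center the matrix, reduce to the deterministic bound $\norm{\Av\Bv}_F^2 \le \norm{\Av}_F^2\norm{\Bv}_2^2$ via the row-wise decomposition $\norm{\Av\Bv}_F^2 = \sum_i \norm{\av_i\Bv}_2^2$, and take expectations. Your extra remark on the row-vector orientation (passing through $\Wv^\top$ and $\norm{\Wv^\top}_2 = \norm{\Wv}_2$) is a careful touch but does not change the argument.
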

\begin{proof}
Firstly, for any matrices $\Av$ and $\Bv$, by the definition of Frobenius and operator norm, we have
\begin{align*}
 \norm{\Av\Bv}_F^2 = \sum_{i} \norm{\av_i\Bv}_2^2 \le \sum_i \norm{\av_i}_2^2 \norm{\Bv}_2^2 = \norm{\Av}_F^2\norm{\Bv}_2^2.  
\end{align*}
Let $\muv=\E{\Xv}$, and utilize this inequality, we have  
\begin{align*}
\VarF{\Xv\Wv} &= \Eb\norm{\ve{\Xv\Wv}-\Eb[\ve{\Xv\Wv}]}_2^2= \Eb\norm{\Xv\Wv-\Eb[\Xv\Wv]}_F^2 \\
&=\mathbb E \norm{ (\Xv-\muv)\Wv }_F^2  \le \E{\norm{ (\Xv-\muv) }_F^2 \norm{\Wv}_2^2} = \VarF{\Xv}\norm{\Wv}_2^2.
\end{align*}
\end{proof}

\begin{proposition}\label{prop:var}
For constant matrices $\Av$, $\Bv$ and a random matrix $\epsilonv$, if for all entries $i,j$,  $\Var{\epsilon_{ij}}\le \sigma^2$, then
$$\VarF{\Av\epsilonv\Bv}\le \sigma^2 \norm{\Av}_F^2 \norm{\Bv}_F^2.$$
\end{proposition}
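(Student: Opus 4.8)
The plan is to center the randomness and reduce the statement to an elementary second-moment computation, in the same spirit as the preceding propositions. Since $\Av$ and $\Bv$ are constant, $\E{\Av\epsilonv\Bv}=\Av\E{\epsilonv}\Bv$, so setting $\deltav := \epsilonv - \E{\epsilonv}$ gives $\VarF{\Av\epsilonv\Bv}=\E\norm{\Av\deltav\Bv}_F^2$, where each entry of $\deltav$ is centered, $\E{\delta_{ij}}=0$, with $\E{\delta_{ij}^2}=\Var{\epsilon_{ij}}\le\sigma^2$. This is the analogue of the substitution $\muv=\E{\Xv}$ used in Proposition~\ref{prop:cauchy}.

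Next I would expand the Frobenius norm entrywise. Writing $(\Av\deltav\Bv)_{pq}=\sum_{i,j}A_{pi}\delta_{ij}B_{jq}$ and squaring, linearity of expectation yields
$$\E\norm{\Av\deltav\Bv}_F^2=\sum_{p,q}\sum_{i,j,k,l}A_{pi}A_{pk}B_{jq}B_{lq}\,\E{\delta_{ij}\delta_{kl}}.$$
The crux of the argument is that the off-diagonal second moments $\E{\delta_{ij}\delta_{kl}}$ for $(i,j)\ne(k,l)$ must vanish, leaving only $\sum_{i,j}\E{\delta_{ij}^2}\big(\sum_p A_{pi}^2\big)\big(\sum_q B_{jq}^2\big)$.

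The step I expect to be the main obstacle is precisely this cancellation of cross terms: it requires that the entries of $\epsilonv$ (hence of $\deltav$) be \emph{uncorrelated}, not merely that each has bounded variance. This is the relevant situation in our application, where $\epsilonv$ is the entrywise stochastic-rounding noise and is therefore independent across coordinates; I would state this hypothesis explicitly, since without it the bound can genuinely fail (e.g.\ fully correlated noise inflates $\E\norm{\Av\deltav\Bv}_F^2$ by factors depending on the row/column sums of $\Av$ and $\Bv$). An alternative would be to vectorize via $\ve{\Av\deltav\Bv}=(\Bv^\top\otimes\Av)\ve{\deltav}$ and bound the resulting trace against the diagonal covariance of $\ve{\deltav}$, but the entrywise route matches the elementary style of the surrounding proofs and avoids Kronecker machinery.

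Once the cross terms are gone, the bound $\E{\delta_{ij}^2}\le\sigma^2$ gives
$$\E\norm{\Av\deltav\Bv}_F^2=\sum_{i,j}\E{\delta_{ij}^2}\Big(\sum_p A_{pi}^2\Big)\Big(\sum_q B_{jq}^2\Big)\le\sigma^2\Big(\sum_{i,p}A_{pi}^2\Big)\Big(\sum_{j,q}B_{jq}^2\Big),$$
and recognizing $\sum_{i,p}A_{pi}^2=\norm{\Av}_F^2$ and $\sum_{j,q}B_{jq}^2=\norm{\Bv}_F^2$ finishes the proof. The only genuine content is the factorization of the double sum, which is immediate once the diagonal structure is in place.
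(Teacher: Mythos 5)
Your proof is correct and follows essentially the same route as the paper's: expand $\VarF{\Av\epsilonv\Bv}$ entrywise over the output coordinates and bound $\Var{\epsilon_{kl}}\le\sigma^2$ term by term, so that the double sum factors into $\norm{\Av}_F^2\norm{\Bv}_F^2$. The one substantive point you add is the explicit hypothesis that the entries of $\epsilonv$ be pairwise uncorrelated: the paper's proof uses this silently in the step $\Var{\sum_{kl}A_{ik}\epsilon_{kl}B_{lj}}=\sum_{kl}A_{ik}^2\Var{\epsilon_{kl}}B_{lj}^2$, which is the variance-of-a-sum identity and is valid only for uncorrelated summands. You are right that the proposition as stated can fail without it (perfectly correlated entries inflate the variance by factors tied to the row and column sums of $\Av$ and $\Bv$), and right that the intended application supplies it, since stochastic rounding is applied independently to each entry; making the hypothesis explicit is a strict improvement on the paper's write-up rather than a deviation from it.
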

\begin{proof}
\begin{align*}
\VarF{\Av\epsilonv\Bv} &= \sum_{ij}\Var{\av_{i}\epsilonv\Bv_{:,j}} = \sum_{ij}\Var{\sum_{kl} A_{ik}\epsilon_{kl}B_{lj}} = \sum_{ijkl} A_{ik}^2 \Var{\epsilon_{kl}} B_{lj}^2  \\
&\le \sigma^2 \sum_{ijkl} A_{ik}^2 B_{lj}^2 = \sigma^2 \norm{\Av}_F^2 \norm{\Bv}_F^2.
\end{align*}
\end{proof}

\section{Proofs}\label{sec:proofs}
In this section, we give the proofs on the gradient bias and variance used in the main text. 
\subsection{Proof of Theorem~\ref{thm:bias}}
\label{sec:proof_thm_bias}
\begin{proof}
We prove by induction. Firstly, \begin{align*}
\hn_{\Hl{L}}=\nabla_{\Hl{L}}=\partial \ell/\partial \Hl{L},
\end{align*}
so $\Econd{\hn_{\Hl{l}}}{\Bc}=\nabla_{\Hl{l}}$ holds for $l=L$. Assume that $\Econd{\hn_{\Hl{l}}}{\Bc}=\nabla_{\Hl{l}}$ holds for $l$, then we have
\begin{align*}
 \mbox{vec}\left( \Econd{\hn_{\Hl{l-1}} }{\Bc} \right) = \Econd{\ve{\hn_{\Hl{l-1}}} }{\Bc},
\end{align*}
because $\ve{\cdot}$ does not affect the expectation. According to the definition Eq.~(\ref{eqn:quantized-back-prop}), we have $$\Econd{\ve{\hn_{\Hl{l-1}}} }{\Bc}=\Econd{\ve{\qb{\hn_{\Hl{l}}}} \Jl }{\Bc}.$$ Since $\Jl$ is deterministic given $\Bc$, we have $$\Econd{\ve{\qb{\hn_{\Hl{l}}}} \Jl }{\Bc}=\mbox{vec}\left(\Econd{\qb{\hn_{\Hl{l}}}}{\Bc}\right)\Jl=\mbox{vec}\left(\Econd{\hn_{\Hl{l}}}{\Bc}\right)\Jl.$$ By induction assumption and Eq.~(\ref{eqn:back-prop}), $$\mbox{vec}\left(\Econd{\hn_{\Hl{l}}}{\Bc}\right)\Jl=\ve{\nabla_{\Hl{l}}} \Jl=\ve{\nabla_{\Hl{l-1}}}.$$ So $ \Econd{\hn_{\Hl{l-1}} }{\Bc}=\nabla_{\Hl{l-1}}$. Similarly,
\begin{align*}
\mbox{vec}\left( \Econd{\hn_{\Thetal{l}} }{\Bc} \right) 
=  \Econd{
\ve{\qb{\hn_{\Hl{l}}} } \Kl
}{\Bc} 
=  \ve{\nabla_{\Hl{l}}}\Kl =
\ve{\nabla_{\Thetal{l}}}.
\end{align*}
Therefore, $\Econd{\hn_{\Thetal{l}} }{\Bc}=\nabla_{\Thetal{l}}$. Taking $l$ from $L$ to 1, we prove
\begin{align*}
\forall l\in [L], \Econd{\hn_{\Hl{l}} }{\Bc}=\nabla_{\Hl{l}}; \quad \forall l\in [L]_+, \Econd{\hn_{\Thetal{l}} }{\Bc}=\nabla_{\Thetal{l}},
\end{align*}
so $\Econd{\hn_{\Thetav}}{\Bc}=\nabla_{\Thetav}$.
\end{proof}

\subsection{Proof of Theorem~\ref{thm:variance}}
\label{sec:proof_thm_variance}
\begin{proof}
By Proposition~\ref{prop:law} and Theorem~\ref{thm:bias}, we have
\begin{align*}
\VarF{\hn_{\Thetav}} = \E{\VarFcond{\hn_{\Thetav}}{\Bc}} + \VarF{\Econd{\hn_{\Thetav}}{\Bc}} 
= \E{\VarFcond{\hn_{\Thetav}}{\Bc}} + \VarF{\nabla_{\Thetav}}.
\end{align*}
By definition of $\VarF{\cdot}$, we have $\VarFcond{\hn_{\Thetav}}{\Bc}=\sum_{l=1}^L \VarFcond{\ve{\hn_{\Thetal{l}}}}{\Bc} $. Apply Proposition 1 and Eq.~(\ref{eqn:quantized-back-prop}), we have
\begin{align*}
&\E{\VarFcond{\ve{\hn_{\Thetal{l}}}}{\Bc} } \\
=& \E{\VarFcond{\ve{\qb{\hn_{\Hl{l}}}} \Kl  }{\Bc}} \\
=& \E{\VarFcond{\ve{\qb{\hn_{\Hl{l}}}} \Kl  }{ \hn_{\Hl{l}} }}
+ \E{\VarFcond{ \Econd{ \ve{\qb{\hn_{\Hl{l}}}} \Kl } { \hn_{\Hl{l}} }}{\Bc}} \\
=& \E{\VarFcond{\ve{\qb{\hn_{\Hl{l}}}} \Kl  }{ \hn_{\Hl{l}} }}
+ \E{\VarFcond{  \ve{\hn_{\Hl{l}}} \Kl }{\Bc}},
\end{align*}
where 
\begin{align*}
  & \E{\VarFcond{  \ve{\hn_{\Hl{l}}} \Kl }{\Bc}} \\
=& \E{\VarFcond{  \ve{\qb{\hn_{\Hl{l+1}}}} \Jv^{(l+1)}  \Kl }{\hn_{\Hl{l+1}}}}
+ \E{\VarFcond{ 
		\Econd{  \ve{\qb{\hn_{\Hl{l+1}}}} \Jv^{(l+1)}  \Kl }{\hn_{\Hl{l+1}}} 
} {\Bc}
}\\
=& \E{\VarFcond{  \ve{\qb{\hn_{\Hl{l+1}}}} \Jv^{(l+1)}  \Kl }{\hn_{\Hl{l+1}}}}
+ \E{\VarFcond{ 
	  \ve{\hn_{\Hl{l+1}}} \Jv^{(l+1)}  \Kl 
	} {\Bc}
}.
\end{align*}
Repeat this procedure, we can finally get
\begin{align*}
\E{\VarFcond{\ve{\hn_{\Thetal{l}}}}{\Bc} } = \sum_{k=l}^L \E{ 
	\VarFcond{\ve{\qb{\hn_{\Hl{k}}}}\gammav^{(l,k)}}{ \hn_{\Hl{k}} }
}.
\end{align*}
Putting it together, we have
\begin{align*}
\VarF{\hn_{\Thetav}} 
=& \E{\VarFcond{\hn_{\Thetav}}{\Bc}} + \VarF{\nabla_{\Thetav}}
= \VarF{\nabla_{\Thetav}} + \sum_{l=1}^L \E{ \VarFcond{\ve{\hn_{\Thetal{l}}}}{\Bc} } \\
=&\VarF{\nabla_{\Thetav}} + \sum_{l=1}^L\sum_{k=l}^L \E{ 
	\VarFcond{\ve{\qb{\hn_{\Hl{k}}}}\gammav^{(l,k)}}{ \hn_{\Hl{k}} }
} \\
=&\VarF{\nabla_{\Thetav}} + \sum_{k=1}^L\sum_{l=1}^k \E{ 
	\VarFcond{\ve{\qb{\hn_{\Hl{k}}}}\gammav^{(l,k)}}{ \hn_{\Hl{k}} }
} \\
=& \VarF{\nabla_{\Thetav}} + \sum_{l=1}^L\E{\sum_{k=1}^l
	\VarFcond{\ve{\qb{\hn_{\Hl{l}}}}\gammav^{(k,l)}}{ \hn_{\Hl{l}} }
},\quad\quad(\ref{eqn:variance-general})
\end{align*}
where in the second last line we swap the order of inner and outer summations, and in the last line we swap the symbols $k$ and $l$, and utilize the linearity of expectation.

Utilizing Proposition~\ref{prop:cauchy}, we have
\begin{align*}
\VarFcond{\ve{\qb{\hn_{\Hl{l}}}}\gammav^{(k,l)}}{ \hn_{\Hl{l}} } \le \VarFcond{\ve{\qb{\hn_{\Hl{l}}}}}{ \hn_{\Hl{l}} }\norm{\gammav^{(k,l)}}_2^2.
\end{align*}
Putting it together
\begin{align*}
\VarF{\hn_{\Thetav}} &\le     \VarF{\nabla_{\Thetav}} + \sum_{l=1}^L\E{\sum_{k=1}^l
	\VarFcond{\ve{\qb{\hn_{\Hl{l}}}}}{ \hn_{\Hl{l}} }\norm{\gammav^{(k,l)}}_2^2
}\\
&=\VarF{\nabla_{\Thetav}} + \sum_{l=1}^L\E{
\VarFcond{\qb{\hn_{\Hl{l}}}}{ \hn_{\Hl{l}} }
\sum_{k=1}^l
	\norm{\gammav^{(k,l)}}_2^2
}.
\end{align*}

\end{proof}

\section{Variance of Specific Quantizers}\label{sec:appendix-variance}
\begin{proposition} (Variance of stochastic rounding)
For any $\Xv\in \Rb^{N\times M}$, $\Var{\SR(\Xv)}\le \frac{NM}{4}$.
\end{proposition}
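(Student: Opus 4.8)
The plan is to reduce the matrix-level variance to a sum of scalar variances and then bound each scalar variance by $1/4$. Recall from the main text that the variance of a random matrix is defined entrywise, so $\Var{\SR(\Xv)} = \sum_{ij} \Var{\SR(X_{ij})}$, and that the stochastic rounding operation is applied independently to each coordinate. Hence it suffices to show $\Var{\SR(x)} \le \frac{1}{4}$ for every scalar $x \in \Rb$, after which summing over all $NM$ entries immediately yields the claimed bound $\frac{NM}{4}$.

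First I would analyze a single scalar. Writing $p = x - \floor{x} \in [0,1)$ for the fractional part, the definition of $\SR$ gives $\SR(x) = \floor{x}+1$ with probability $p$ and $\SR(x) = \floor{x}$ with probability $1-p$. Thus $\SR(x) - \floor{x} \sim \Bern(p)$, and since subtracting the constant $\floor{x}$ leaves the variance unchanged, $\Var{\SR(x)} = p(1-p)$.

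The remaining step is to maximize $p(1-p)$ over $p\in[0,1]$. This is an elementary one-variable optimization: the quadratic $p\mapsto p(1-p)$ attains its maximum $\frac{1}{4}$ at $p=\frac{1}{2}$, so $\Var{\SR(x)}\le\frac{1}{4}$ for all $x$. Summing this bound over the $N\times M$ independent entries gives $\Var{\SR(\Xv)} = \sum_{ij}\Var{\SR(X_{ij})} \le \frac{NM}{4}$, as required.

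There is essentially no obstacle here; the statement is a direct consequence of the fact that stochastic rounding of each entry is a two-point distribution on adjacent integers, whose variance is a Bernoulli variance bounded by $\frac{1}{4}$. The only points requiring care are (i) invoking the entrywise definition of matrix variance together with the independence of $\SR$ across entries, and (ii) noting the degenerate case where $x$ is already an integer, in which case $p=0$ and the variance is $0$, still consistent with the bound.
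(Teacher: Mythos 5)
Your proof is correct and follows essentially the same route as the paper's: both reduce to a single entry via the entrywise definition of matrix variance, show the scalar variance equals $p(1-p)$ where $p$ is the fractional part (you via the Bernoulli framing, the paper by directly expanding $\Eb[\SR(X)-X]^2$, which is the same computation), bound it by $\frac{1}{4}$, and sum over the $NM$ entries. If anything, your version is slightly cleaner than the paper's, which writes the final summation with an equality sign where an inequality is meant.
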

\begin{proof}
For any real number $X$, let $p:=X-\floor{X}\in [0, 1)$, then 
\begin{align*}
&\Var{\SR(X)}=\Eb[\SR(X)-X]^2 = p (\ceil{X}-X)^2  + (1-p) (\floor{X}-X)^2  \\
=& p(1-p)^2 + p^2(1-p) = p(1-p)(1-p+p) = p(1-p) \le \frac{1}{4}.
\end{align*}
Therefore, according to Definition 1,
\begin{align*}
\Var{\SR(\Xv)} = \sum_{ij} \Var{\SR(X_{ij})} = \frac{NM}{4}.
\end{align*}
\end{proof}

For simplicity, all the expectation and variance are conditioned on $\hn_{\Hl{l}}$ in the rest of this section.

\subsection{Per-tensor Quantizer}
\begin{align*}
&\Var{\qb{\hn_{\Hl{l}}}} = 
\Var{\SR\left( \ssl(\hn_{\Hl{l}}-\zl) \right)/\ssl + \zl}\\
=& \frac{1}{(\ssl)^2}\Var{\SR\left( \ssl(\hn_{\Hl{l}}-\zl) \right)} \le \frac{N\Dl{l}}{4 (\ssl)^2}
=\frac{N\Dl{l}}{4 B^2} R(\hn_{\Hl{l}})^2.
\end{align*}

\subsection{Matrix Quantizer}
For the matrix quantizer defined in Eq.~(\ref{eqn:matrix-quantizer}), we have
\begin{align*}
&\Var{\qb{\hn_{\Hl{l}}}} =  
\Var{\Slinv \SR\left( \Sl(\hn_{\Hl{l}}-\Zl) \right) + \Zl} = \Var{\Slinv \SR\left( \Sl(\hn_{\Hl{l}}-\Zl) \right)}.
\end{align*}
Utilizing Proposition~\ref{prop:var} with $\Av=\Slinv$, $\epsilonv=\SR\left( \Sl(\hn_{\Hl{l}}-\Zl) \right)$, and $\Bv=\Iv$,
\begin{align}\label{eqn:matrix-quantizer-variance}
\Var{\qb{\hn_{\Hl{l}}}} \le \frac{1}{4} \norm{\Slinv}_F^2 \norm{\Iv}_F^2=\frac{\Dl{l}}{4}\norm{\Slinv}_F^2.
\end{align}
Minimizing Eq.~(\ref{eqn:matrix-quantizer-variance}) w.r.t. $\Sl$ yields optimization problem~(\ref{eqn:optimizing-variance}) as follows
\begin{align*}
\small
\min_{\Sl} \norm{\Slinv}_F^2, \mbox{    s.t. } R(\Sl \hn_{\Hl{l}})\le B,
\end{align*}

\subsection{Per-sample Quantizer}
When $\Sv=\diag(s_1, \dots, s_N)$, we can rewrite optimization problem~(\ref{eqn:optimizing-variance}) as 
\begin{align}\label{eqn:per-sample-optimization}
\min_{s_1, \dots, s_N} \sum_{i=1}^N s_i^{-2}, \mbox{    s.t. } s_i R(\hn_{\hl{l}})\le B, \forall i\in [N]_+.
\end{align}
Since the objective is monotonic w.r.t. $s_i$, problem~(\ref{eqn:per-sample-optimization}) can be minimized when all the inequality constraints takes equality, i.e., $s_i R(\hn_{\hl{l}})=B$. Therefore, $s_i=B/R(\hn_{\hl{l}})$. Plug this back to Eq.~(\ref{eqn:matrix-quantizer-variance}), we have
\begin{align*}
\Var{\qb{\hn_{\Hl{l}}}}\le \frac{\Dl{l}}{4}\norm{\Slinv}_F^2=\frac{\Dl{l}}{4}
\sum_{i=1}^N \left(B/R(\hn_{\hl{l}})\right)^{-2} = \frac{\Dl{l}}{4B^2}\sum_{i=1}^NR(\hn_{\hl{l}})^2
.
\end{align*}

\subsection{Householder Quantizer}
Let $\lambda_1=R(\hn_{\hv^{(L)}_1})$, $\lambda_2=2 \max_{i\ne 1} \norm{\hn_{\hl{L}}}_{\infty}$, and assume $\lambda_2/\lambda_1\approx 0$. Without loss of generality, we can write 
\begin{align*}
\hn_{\Hl{l}} = 
\begin{bmatrix}
\hn_{\hv^{(l)}_1} \\
\hn_{\Hv^{(l)}_{>1}}
\end{bmatrix} = 
\begin{bmatrix}
\hn_{\hv^{(l)}_1} \\
\mathbf{0} 
\end{bmatrix}
+
\begin{bmatrix}
\mathbf{0} \\
\hn_{\Hv^{(l)}_{>1}}
\end{bmatrix}=\lambda_1 \ev_1 \uv_1 + \frac{1}{2}\lambda_2 \Uv_2,
\end{align*}
such that $R(\uv_1)\le 1$, and $\max_{i\ne 1}\norm{\hn_{\hl{L}}}_{\infty} \le 1$, and $\ev_1$ is a column coordinate vector. 
Furthermore, we construct $\Sl=\Qv \mbox{diag}(s_1, s_2, \dots, s_2)$, where $\Qv=\Iv-2\nv\nv^\top/\norm{\nv}_2^2$ is a Householder reflection with the normal vector $\nv=\mathbf{1}/\sqrt{N}-\ev_1$.

We have
\begin{align*}
\Sl \hn_{\Hl{l}} &= \Qv \mbox{diag}(s_1, s_2, \dots, s_2) \left(\lambda_1 \ev_1 \uv_1 + \frac{1}{2}\lambda_2 \Uv_2\right)
=\Qv\left(\lambda_1 s_1 \ev_1 \uv_1 + \frac{1}{2}\lambda_2 s_2 \Uv_2\right) \\
&= \lambda_1 s_1 N^{-1/2} \mathbf{1} \uv_1 +  \frac{1}{2}\lambda_2 s_2 \Qv\Uv_2.
\end{align*}
Then, utilizing $R(\uv_1)\le 1$,
\begin{align*}
  R(\lambda_1 s_1N^{-1/2} \mathbf{1} \uv_1)  = \lambda_1 s_1N^{-1/2} R(\mathbf{1} \uv_1) = \lambda_1 s_1N^{-1/2}(\max_{j} \uv_{1j}-\min_{j} \uv_{1j}) \le \lambda_1 s_1N^{-1/2}.
\end{align*}
On the other hand,
$$
R(\frac{1}{2}\lambda_2 s_2 \Qv\Uv_2) 
= \frac{1}{2}\lambda_2 s_2 R(\Qv\Uv_2)\le \lambda_2 s_2 \norm{\Qv\Uv_2}_{\infty}=\lambda_2 s_2 \max_j\norm{\Qv\Uv_{2,:j}}_{\infty},
$$
and
$$\norm{\Qv\Uv_{2,:j}}_{\infty}\le \norm{\Qv\Uv_{2,:j}}_{2}= \norm{\Uv_{2,:j}}_{2}\le \sqrt{N}\norm{\Uv_{2,:j}}_{\infty}\le \sqrt{N}.$$
Putting it together, we have 
\begin{align*}
R(\Sl \hn_{\Hl{l}}) \le& R(\lambda_1 s_1 N^{-1/2} \mathbf{1} \uv_1) + R(\frac{1}{2}\lambda_2 s_2 \Qv\Uv_2) 
\le \lambda_1 s_1N^{-1/2} + \lambda_2 s_2 N^{1/2}.
\end{align*}
Therefore, problem~(\ref{eqn:optimizing-variance}) can be rewritten as
\begin{align*}
    \min_{s_1,s_2} s_1^{-2}+(N-1)s_2^{-2},\quad \mbox{s.t. }\lambda_1 s_1N^{-1/2} + \lambda_2 s_2 N^{1/2}= B.
\end{align*}
We minimize an upper bound instead
\begin{align*}
    \min_{s_1,s_2} s_1^{-2}+Ns_2^{-2},\quad \mbox{s.t. }\lambda_1 s_1N^{-1/2} + \lambda_2 s_2 N^{1/2}= B.
\end{align*}
Introducing the multiplier $\tau$, and define the Lagrangian 
\begin{align*}
f(s_1, s_2, \tau) = s_1^{-2}+Ns_2^{-2} + \tau\left(\lambda_1 s_1N^{-1/2} + \lambda_2 s_2 N^{1/2} - B\right).
\end{align*}
Letting $\partial f/\partial s_1=\partial f/\partial s_2=0$, we have
\begin{align*}
-2 s_1^{-3}+\tau\lambda_1 N^{-1/2}=0 &\Rightarrow s_1\propto \lambda_1^{-1/3}N^{1/6}\\
-2 Ns_2^{-3}+\tau\lambda_2 N^{1/2}=0 &\Rightarrow s_2\propto \lambda_2^{-1/3}N^{1/6},\\
\end{align*}
utilizing the equality constraint $\lambda_1 s_1N^{-1/2} + \lambda_2 s_2 N^{1/2}= B$, we have
\begin{align*}
s_1 = B\frac{\lambda_1^{-1/3}N^{1/6}}{\lambda_1^{2/3}N^{-1/3}+\lambda_2^{2/3}N^{2/3}}, \quad 
s_2 = B\frac{\lambda_2^{-1/3}N^{1/6}}{\lambda_1^{2/3}N^{-1/3}+\lambda_2^{2/3}N^{2/3}}.
\end{align*}
Therefore, we have 
\begin{align*}
\norm{\Slinv}_F^2 = s_1^{-2}+(N-1)s_2^{-2} <  s_1^{-2}+Ns_2^{-2} = \frac{1}{B^2}\left(\lambda_1^{2/3}N^{-1/3}+\lambda_2^{2/3}N^{2/3}\right)^3,
\end{align*}
plugging it to Eq.~(\ref{eqn:matrix-quantizer-variance}), we have
\begin{align*}
\Var{\qb{\hn_{\Hl{l}}}} \le \frac{\Dl{l}}{4B^2}\left(\lambda_1^{2/3}N^{-1/3}+\lambda_2^{2/3}N^{2/3}\right)^3 
\approx \frac{\Dl{l}}{4B^2}\lambda_1^{2}N^{-1} = O(\lambda_1^{2}/N).
\end{align*}

\subsection{Details of Block Householder Quantizer}\label{sec:appendix-BHQ}
We construct the block Householder quantizer as follows.
\begin{enumerate}
    \item Sort the magnitude $M_i:=\norm{\hn_{\hl{l}}}_{\infty}$ of each row in descending order.
    \item Loop over the number of groups $G$. Assume that $\{M_i\}$ is already sorted, we consider the first $G$ rows as ``large'' and all the other $N-G$ rows as ``small''. The $i$-th group contains the $i$-th largest row and a number of small rows. Furthermore, we heuristically set the size of the $i$-th group to $(N-G)\frac{M_i}{\sum_{i=1}^G M_i}$, i.e., proportional to the magnitude of the large row in this group. Finally, we approximate the variance $\norm{\Slinv}_F^2\approx \sum_{i=1}^G M_i^2/\left[(N-G)\frac{M_i}{\sum_{i=1}^G M_i}\right]$ and select the best $G$ with minimal variance. 
    \item Use the grouping of rows described in Step 2 to construct the block Householder quantizer.
\end{enumerate}

\section{Experimental Setup}\label{sec:appendix-experiment}

\noindent\textbf{Model: }
Our ResNet56-v2 model for CIFAR10 directly follows the original paper~\cite{he2016identity}. For the ResNet18/50 model, we adopt a slightly modified version, ResNetv1.5~\cite{rn50v1.5}. The difference between v1.5 and v1 is, in the bottleneck blocks which requires downsampling, v1 has stride = 2 in the first 1x1 convolution, whereas v1.5 has stride = 2 in the 3x3 convolution. According to the authors, this difference makes v1.5 slightly more accurate ($\sim$0.5\%) than v1, but comes with a small performance drawback ($\sim$5\% images-per-second). 

\noindent\textbf{Model hyperparameter: }
For CIFAR10, we follow the hyperparameter settings from the original papers~\cite{he2016deep,he2016identity}, with weight decay of $10^{-4}$.

For ImageNet, we keep all hyperparameters unchanged from~\cite{rn50v1.5}, which has label smoothing=0.1, and weight decay=1/32768.

\noindent\textbf{Optimizer hyperparameter: }
For CIFAR10, we follow the original paper~\cite{he2016deep}, with a batch size of 128, initial learning rate of 0.1, and momentum 0.9. We train for 200 epochs.

For ImageNet, we follow~\cite{rn50v1.5}, which has a momentum of 0.875. Due to limited device memory, we set the batch size to 50 per GPU with 8 GPUs in total, the initial learning rate is 0.4. We train for 90 epochs, and the first 4 epochs has linear warmup of the learning rate. 

For both datasets, we use a cosine learning rate schedule, following~\cite{rn50v1.5}.

\noindent\textbf{Quantization: }
We follow the settings in~\cite{banner2018scalable}. 
All the linear layers are quantized, where the forward propagation is 
$$
\footnotesize{\F{l}{\tHl{l-1}; \tT{l}}=\tHl{l-1}\tT{l}}, \mbox{where } \tHl{l-1}=\qa{\Hl{l-1}},~~\tT{l}=\qt{\Thetal{l}},
$$
both $\qa{\cdot}$ and $\qt{\cdot}$ are deterministic PTQs that quantizes to 8-bit. The back propagation is 
\begin{align*} 
\small
\hn_{\Thetal{l}} = \mbox{$\tHl{l-1}$}^\top Q_{b1}(\hn_{\Hl{l}}), \quad 
\hn_{\Hl{l-1}} = Q_{b2}(\hn_{\Hl{l}})\tT{l}^\top,
\end{align*}
with gradient bifurcation~\cite{banner2018scalable}. We set $Q_{b1}$ to a 8-bit stochastic PTQ, and $Q_{b2}$ to PTQ, PSQ, or BHQ with 4-8 bits. The original paper~\cite{banner2018scalable} set $Q_{b1}$ as an identity mapping  (i.e., not quantized), and $Q_{b2}$ to be 8-bit stochastic PTQ. 

We quantize the inputs and gradients of batch normalization layers, as described in our framework. 

\noindent\textbf{Number of training / evaluation runs: }
Due to the limited amount of computation resources, we train on each setting for only once.

\noindent\textbf{Runtime \& Computing Infrastructure: }
Following~\cite{banner2018scalable}, we simulate the training with FP32. Our simulator runs approximately 3 times slower than FP32 counterparts. We utilize a machine with 8 RTX 2080Ti GPUs for training.

\section{Additional Experimental Results}\label{sec:appendix-additional-results}

\begin{figure}
    \centering
    \includegraphics[width=0.32\linewidth]{more_figures/cifar10_ptq_acc.pdf}
    \includegraphics[width=0.32\linewidth]{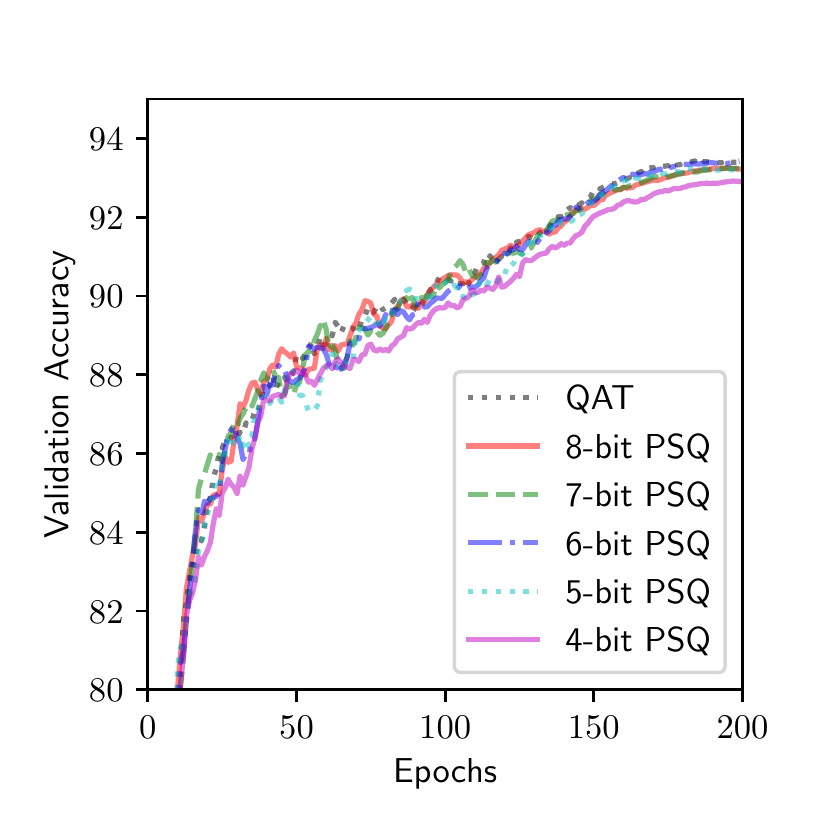}
    \includegraphics[width=0.32\linewidth]{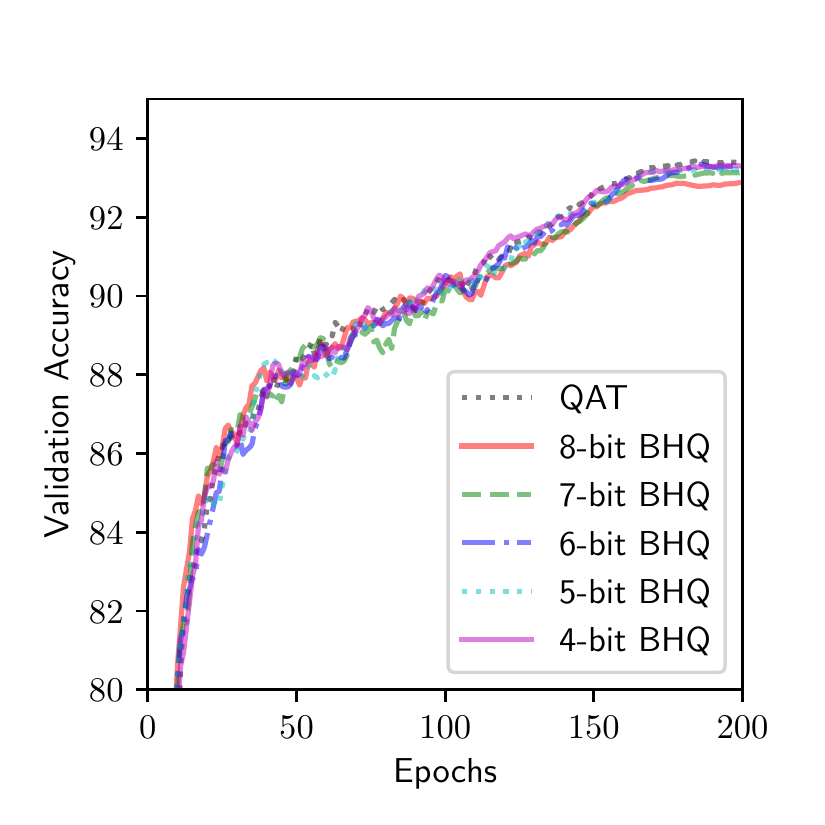}\\
    \includegraphics[width=0.32\linewidth]{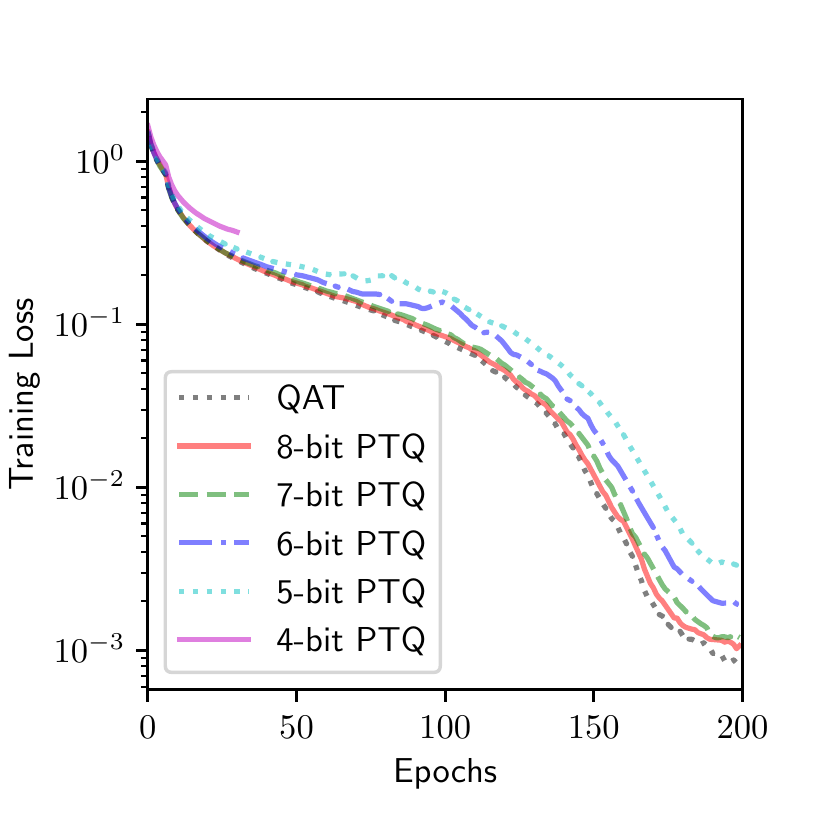}
    \includegraphics[width=0.32\linewidth]{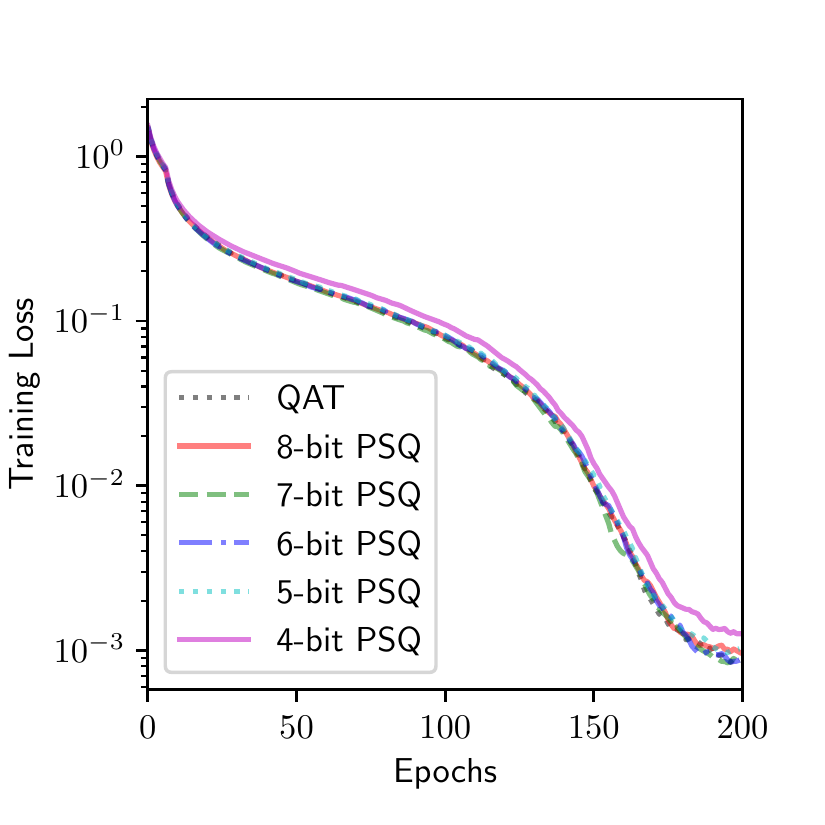}
    \includegraphics[width=0.32\linewidth]{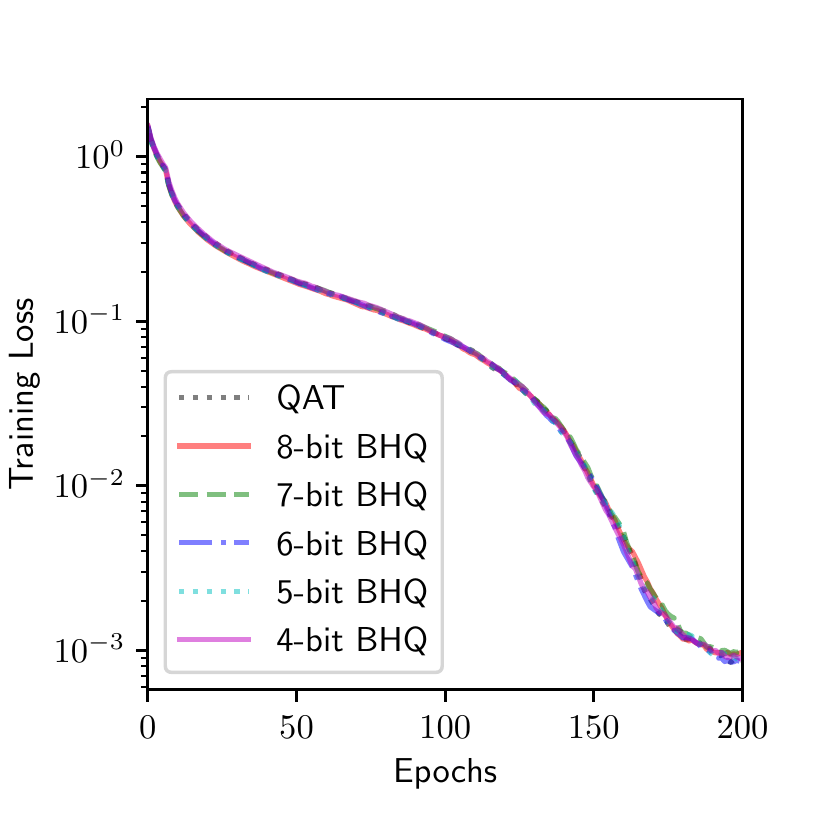}\\
    \caption{CIFAR10 convergence curves.}
    \label{fig:appendix-cifar10}
\end{figure}

\begin{figure}
    \centering
    \includegraphics[width=0.32\linewidth]{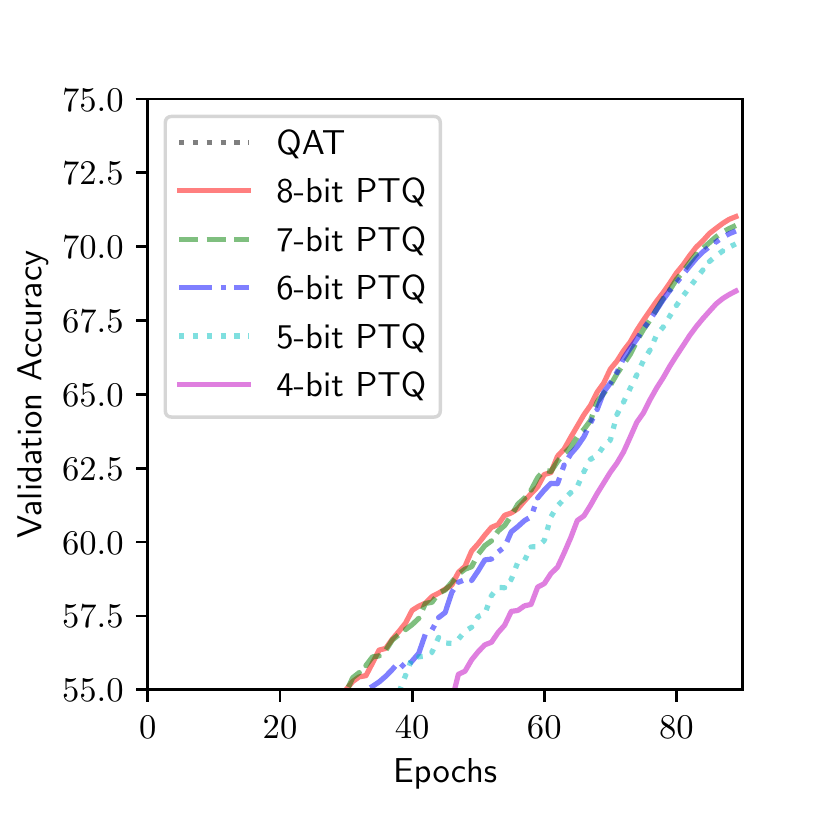}
    \includegraphics[width=0.32\linewidth]{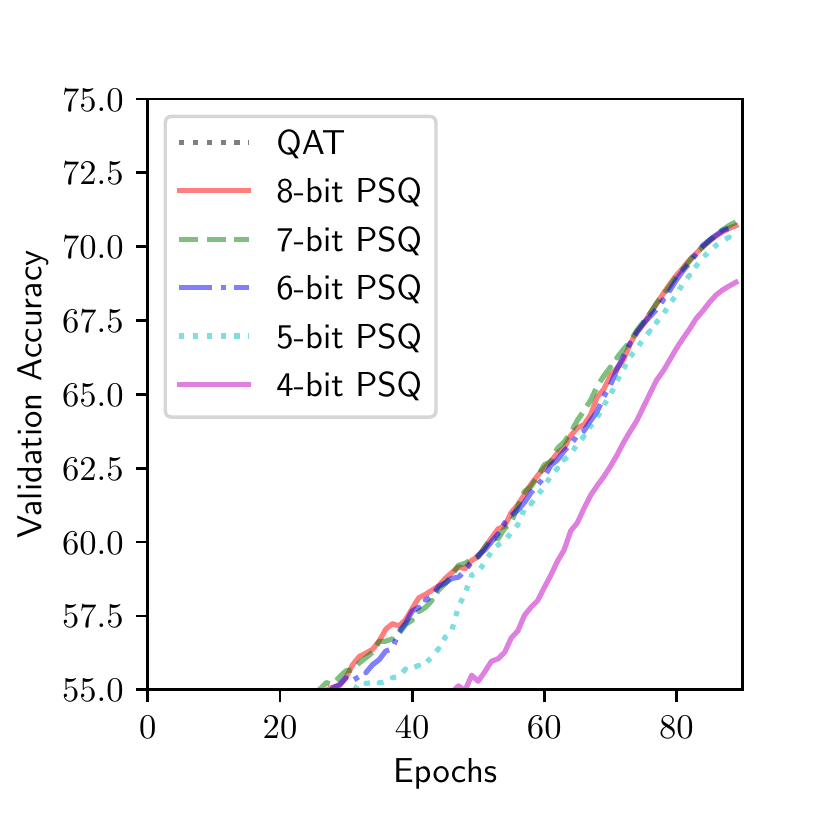}
    \includegraphics[width=0.32\linewidth]{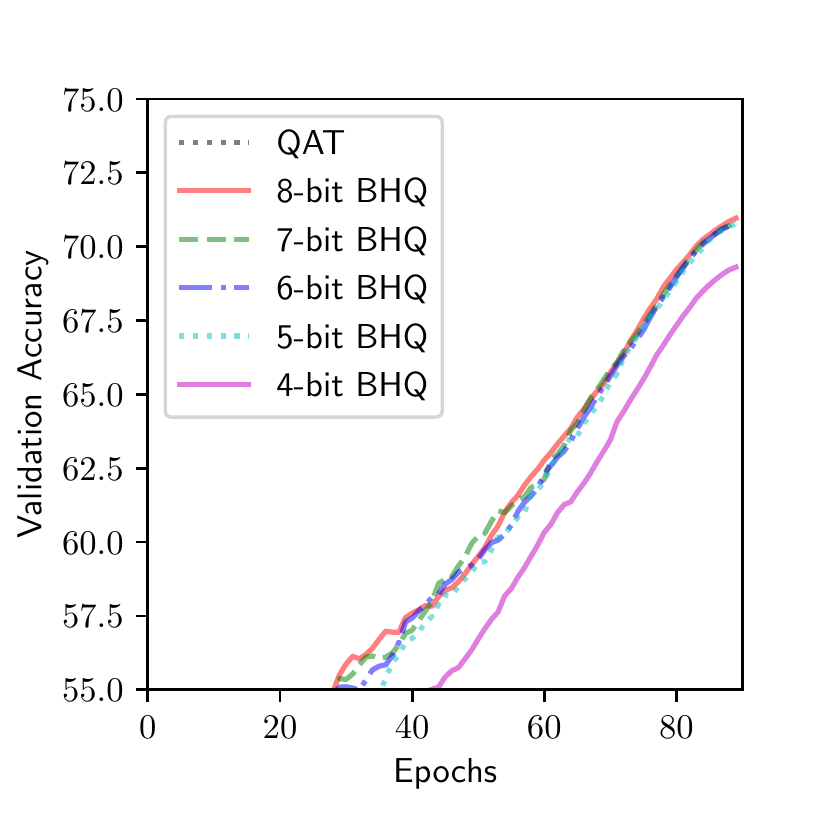}\\
    \includegraphics[width=0.32\linewidth]{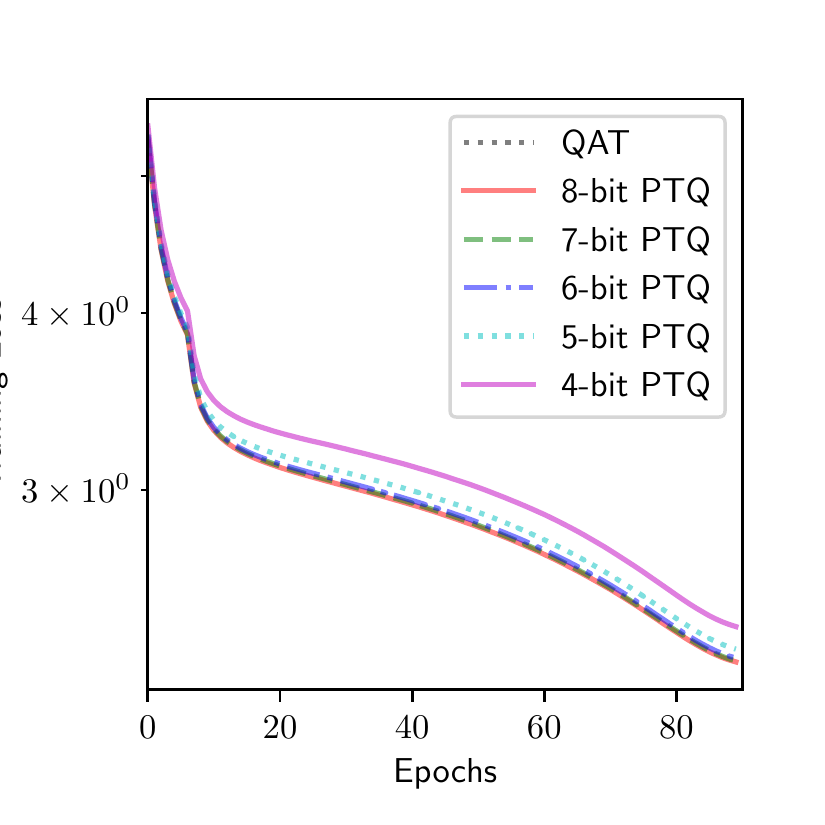}
    \includegraphics[width=0.32\linewidth]{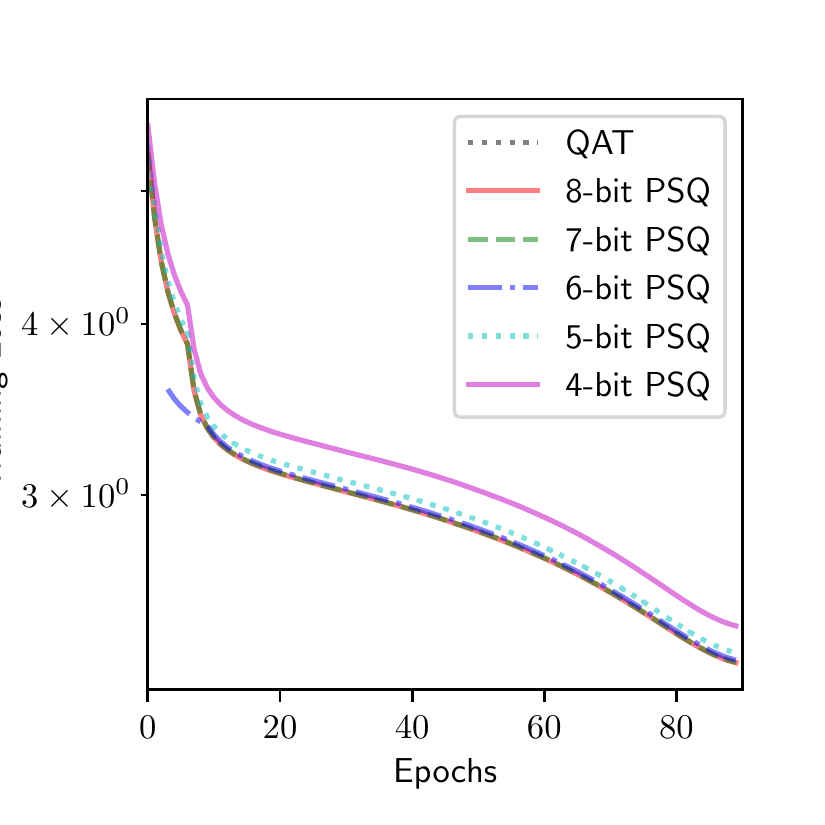}
    \includegraphics[width=0.32\linewidth]{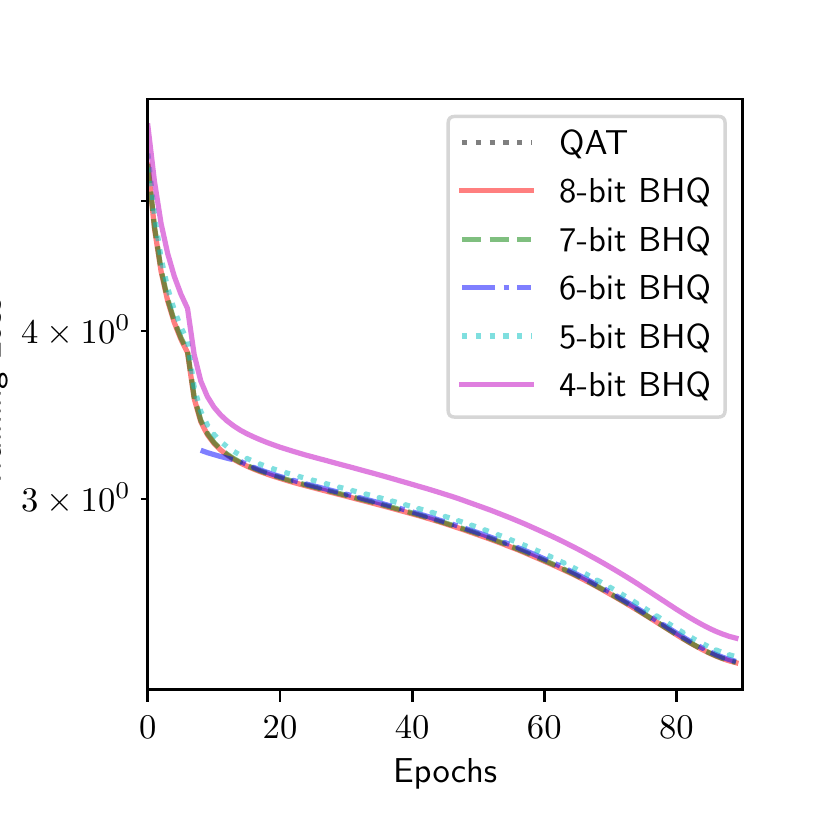}\\
    \caption{ResNet18 on ImageNet convergence curves.}
    \label{fig:appendix-resnet18}
\end{figure}

\begin{figure}
    \centering
    \includegraphics[width=0.32\linewidth]{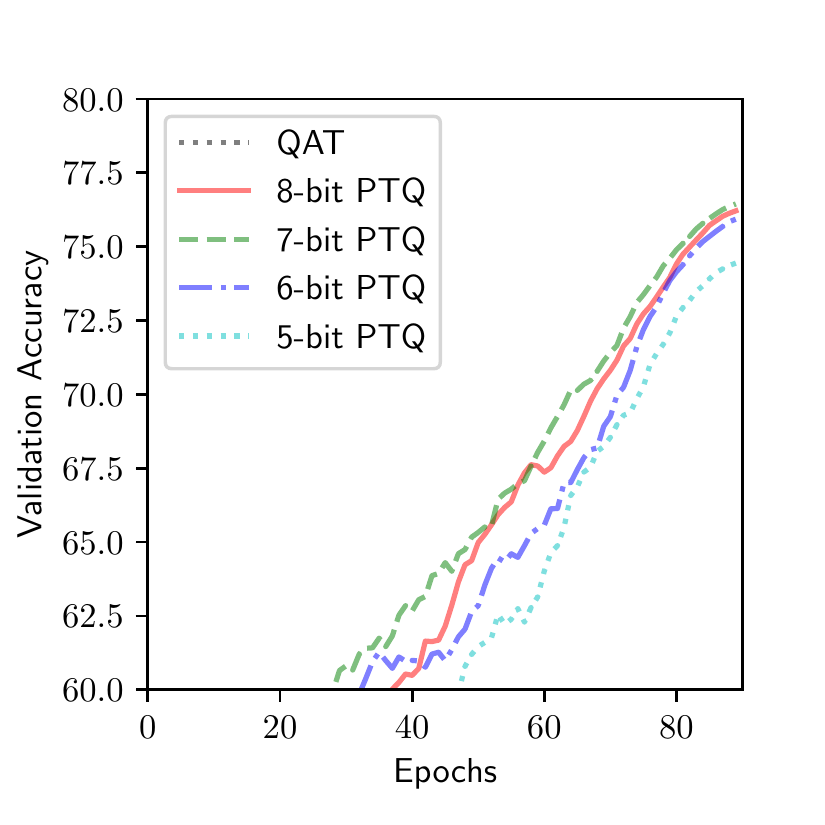}
    \includegraphics[width=0.32\linewidth]{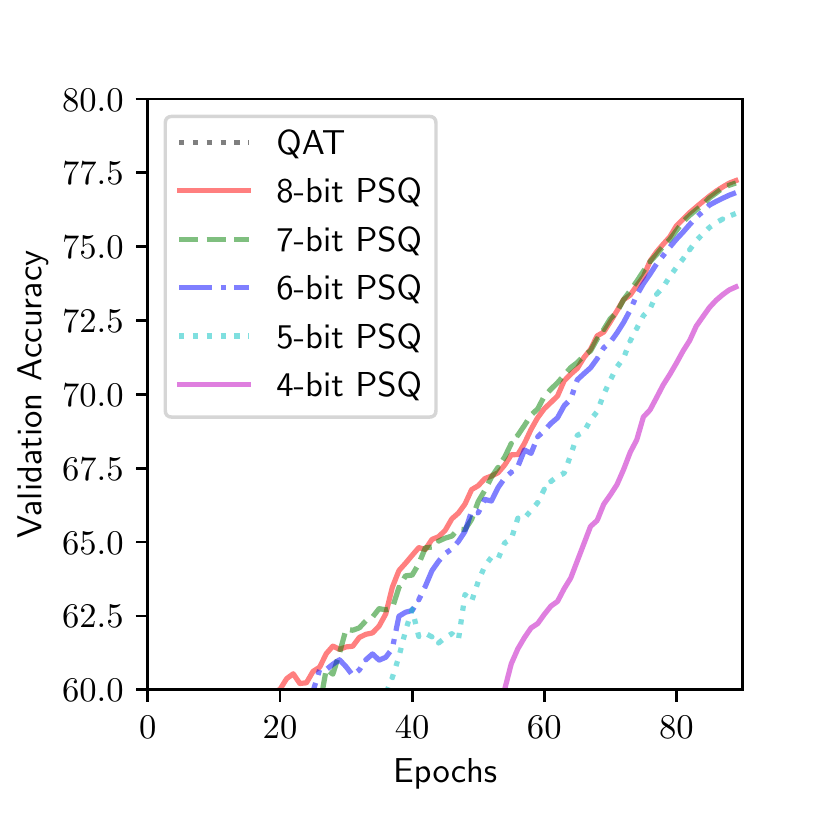}
    \includegraphics[width=0.32\linewidth]{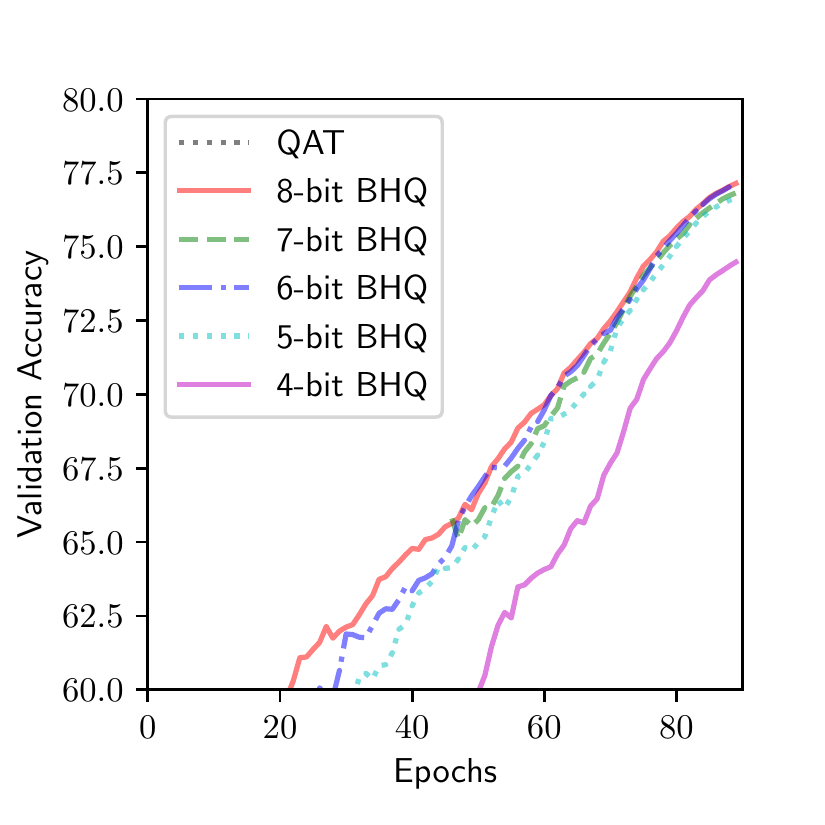}\\
    \includegraphics[width=0.32\linewidth]{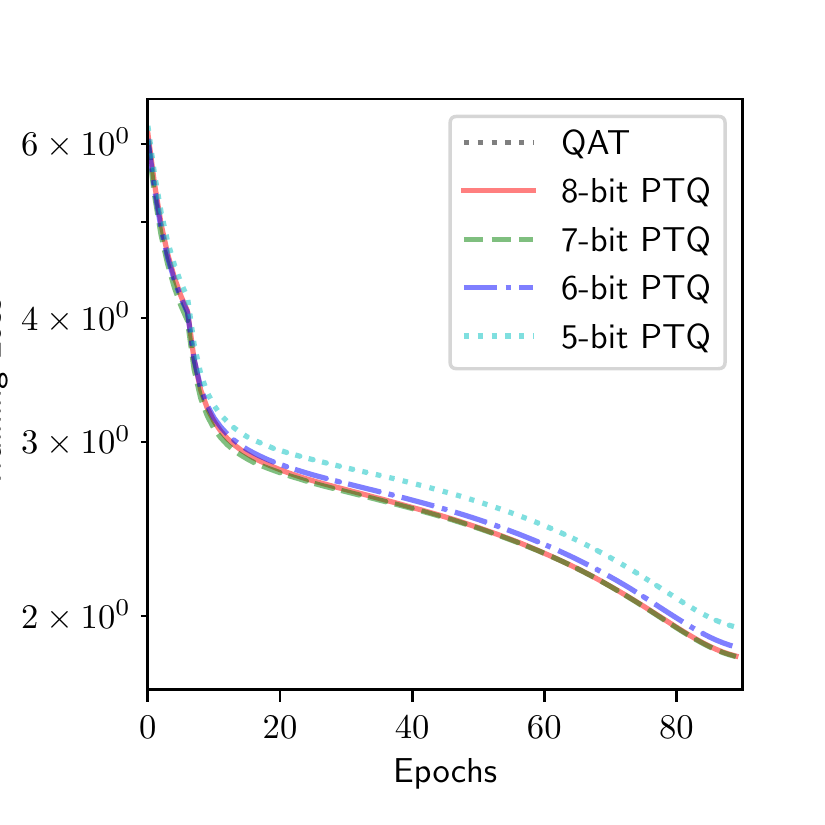}
    \includegraphics[width=0.32\linewidth]{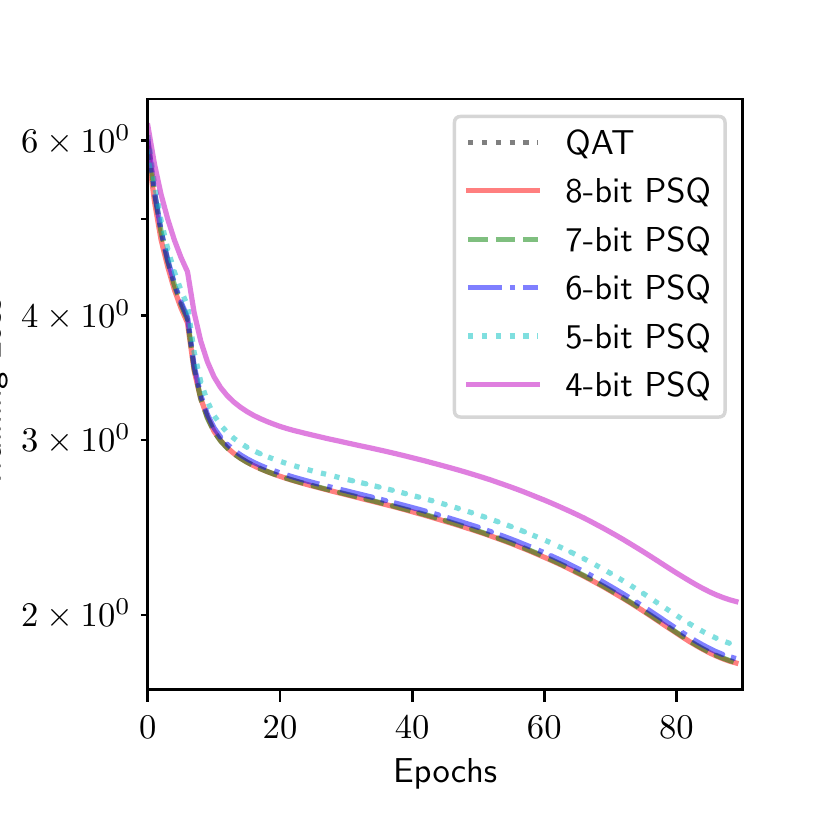}
    \includegraphics[width=0.32\linewidth]{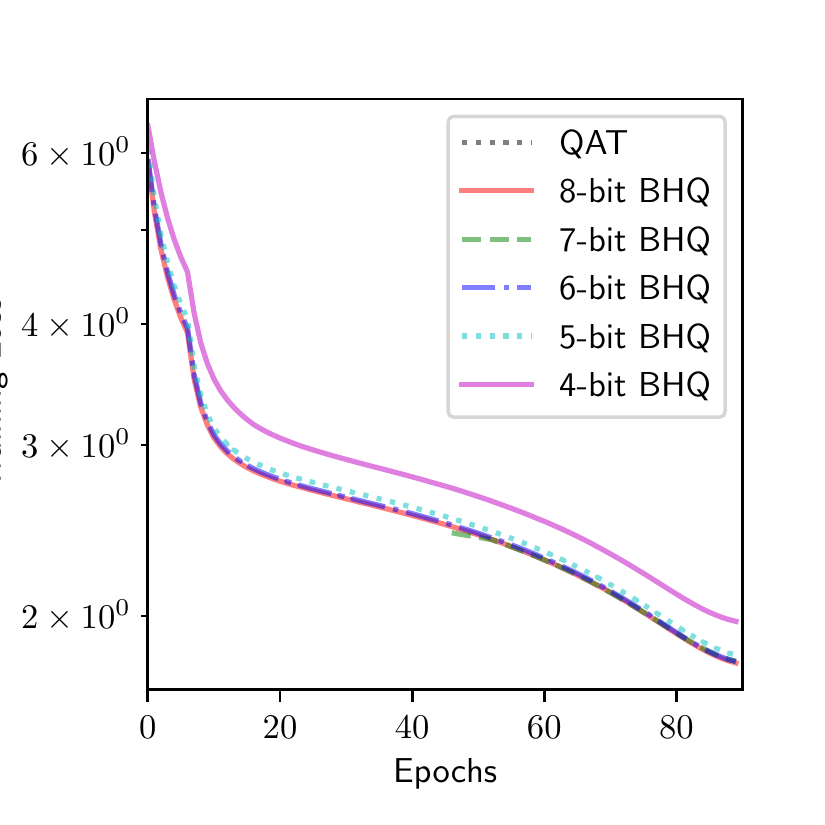}\\
    \caption{ResNet50 convergence curves.}
    \label{fig:appendix-resnet50}
\end{figure}
\end{document}